\renewcommand{\cite}{\citep}
\definecolor{darkgreen}{rgb}{0,.4,.2}
\definecolor{darkblue}{rgb}{.1,.2,.6}
\definecolor{brightblue}{rgb}{0,0.6,0.8}
\newtheorem{theorem}{Theorem}[section]
\newtheorem{remark}[theorem]{Remark}
\newtheorem{definition}[theorem]{Definition}
\newtheorem{lemma}[theorem]{Lemma}
\newtheorem*{rep@theorem}{\rep@title}
\newcommand{\newreptheorem}[2]{%
\newenvironment{rep#1}[1]{%
 \def\rep@title{#2 \ref{##1}}%
 \begin{rep@theorem}}%
 {\end{rep@theorem}}}
\algnewcommand\algorithmicswitch{\textbf{switch}}
\algnewcommand\algorithmiccase{\textbf{case}}
\algnewcommand\algorithmicassert{\texttt{assert}}
\algnewcommand\Assert[1]{\State \algorithmicassert(#1)}%
\newcommand{\R}{\mathbb{R}}
\newcommand{\E}{\mathbb{E}}
\newcommand{\av}{\boldsymbol{a}}
\newcommand{\ev}{\boldsymbol{e}}
\newcommand{\pv}{\boldsymbol{p}}
\newcommand{\uv}{\boldsymbol{u}}
\newcommand{\vv}{\boldsymbol{v}}
\newcommand{\wv}{\boldsymbol{w}}
\newcommand{\xv}{\boldsymbol{x}}
\newcommand{\yv}{\boldsymbol{y}}
\newcommand{\Fv}{\boldsymbol{F}}
\newcommand{\Gv}{\boldsymbol{G}}
\newcommand{\OA}{\mathcal{O}_{\hspace{-1pt}A}}
\newcommand{\OB}{\mathcal{O}_{\hspace{-1pt}B}}
\newcommand{\0}{\boldsymbol{0}}
\newcommand{\alphav}{\boldsymbol{\alpha}}
\newcommand{\kappav}{\boldsymbol{\kappa}}
\DeclarePairedDelimiter\abs{\lvert}{\rvert}
\renewcommand{\epsilon}{\ensuremath\varepsilon}
\renewcommand{\phi}{\ensuremath{\varphi}}
\begin{document}

\twocolumn[

\aistatstitle{Faster Coordinate Descent via Adaptive Importance Sampling}

\aistatsauthor{ Dmytro Perekrestenko \And Volkan Cevher \And Martin Jaggi }

\aistatsaddress{ ETH Zurich \And EPFL \And EPFL } ]

\begin{abstract}
  \vspace{-3mm}
Coordinate descent methods employ random partial updates of decision variables in order to solve huge-scale convex optimization problems. 
In this work, we introduce new \textit{adaptive} rules for the random selection of their updates. By adaptive, 
we mean that our selection rules are based on the dual residual or the primal-dual gap estimates and can change at each iteration. We theoretically characterize 
the performance of our selection rules and demonstrate improvements over the state-of-the-art, and extend our theory and algorithms to general convex objectives. Numerical evidence with hinge-loss support vector 
machines and Lasso confirm that the practice follows the theory. 
\end{abstract}

 \vspace{-3mm}
\section{Introduction}
 \vspace{-3mm}
Coordinate descent methods rely on random partial updates of decision variables for scalability. 
Indeed, due to their space and computational efficiency as well as their ease of implementation, 
these methods are the state-of-the-art for a wide selection of standard machine learning and 
signal processing applications \citep{Fu:1998cd,Hsieh:2008bd,Wright:2015bn}.

Basic coordinate descent methods sample an active coordinate set for optimization 
uniformly at random, \textit{cf.}, Stochastic Dual Coordinate Ascent (SDCA) \cite{ShalevShwartz:2013wl} 
and other variants \cite{Friedman:2007ut,Friedman:2010wm,ShalevShwartz:2011vo}. 
However, recent results suggest that by employing an appropriately defined \emph{non-uniform} fixed sampling strategy, 
the convergence can be improved both in the theory as well as in practice  \cite{Zhao:2014vg, Necorara:2012wt,Nesterov:2012fa}.

In this work, we show that we can surpass the existing convergence rates 
by exploiting adaptive sampling strategies that change the sampling probability 
distribution during each iteration. For this purpose, we adopt the primal-dual 
framework of \citet{Dunner:2016vga}. In contast, however, we also 
handle convex optimization problems with general convex regularizers without
assuming strong convexity of the regularizer. 

In particular, we consider an adaptive coordinate-wise duality gap based sampling. 
Hence, our work can be viewed  as a natural continuation of the work of \citet{Csiba:2015ue}, 
where the authors introduce an adaptive version of SDCA for the  
smoothed hinge-loss support vector machine (SVM). However, our work generalizes %
the gap-based adaptive criterion of \cite{Osokin:2016tp} %
in a nontrivial way  
to a broader convex optimization template of the following form: 
\begin{equation}\label{eq: template}
   \min_{\alphav\in\R^n}\  f(A\alphav) + \sum_i g_i(\alpha_i), 
\vspace{-1mm}
\end{equation}
where $A$ is the data matrix, $f$ is a smooth convex function, and each $g_i$ is a general convex function. 

The template problem class in \eqref{eq: template} includes not only smoothed hinge-loss SVM, 
but also Lasso, Ridge Regression, (the dual formulation of the) original hinge-loss SVM, 
Logistic Regression, etc. As a result, our theoretical results for adaptive sampling can also recover 
the existing results for fixed non-uniform \cite{Zhao:2014vg} and uniform \cite{Dunner:2016vga} 
sampling as special cases.

\textbf{Contributions.} Our contributions are as follows: \vspace{-3mm}
\begin{itemize}
\item We introduce new adaptive and fixed non-uniform sampling schemes for random coordinate descent for problems for the template \eqref{eq: template}.
\vspace{-1mm}

\item To our knowledge, we provide the first convergence rate analysis of coordinate descent methods with adaptive sampling for problems with general convex regularizer (i.e., the class in \eqref{eq: template}). 
\vspace{-1mm}

\item We derive convergence guarantees with arbitrary sampling distributions for both strongly convex and the general convex cases, and identify new convergence improvements. %
\vspace{-1mm}

\item We support the theory with numerical evidence (i.e., Lasso and hinge-loss SVM) and illustrate significant performance improvements in practice. %
\vspace{-1mm}
\end{itemize}
\textit{Outline:} Section 2 provides basic theoretical preliminaries. Section 3 describes our theoretical results and introduces new sampling schemes. Section 4 discusses the application of our theory to machine learning, and compares the computational complexity of proposed sampling methods. Section 5 provides numerical evidence for the new methods. Section 6 discusses our contributions in the light of existing work. 

\section{Preliminaries}
\vspace{-1mm}
We recall some concepts from convex optimization used in the sequel. The \emph{convex conjugate} of a function $f: \R^n\rightarrow \R$ is defined as
$f^*(\vv) := \sup_{\uv\in\R^n} \vv^\top \uv - f(\uv)$.

\begin{definition} A function $f$: $\R^n \rightarrow \R \cup \{+\infty\}$ has $B$-bounded support if its domain is bounded by $B$:
\vspace{-1mm}
\[
f(\uv) < +\infty \rightarrow \|\uv\| \leq B.
\]
\end{definition}

\begin{lemma}[{Duality between Lipschitzness and L-Bounded Support, \cite{Dunner:2016vga}}] 
\label{duality_lip-bound}
Given a proper convex function $g$, it holds that $g$ has $L$-bounded support if and only if $g^*$ is $L$-Lipschitz.
\end{lemma}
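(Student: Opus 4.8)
The plan is to prove the two implications separately, working directly from the definition $g^*(\vv) = \sup_{\uv} \vv^\top \uv - g(\uv)$ together with Fenchel--Moreau biconjugation.

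\textbf{Forward direction} ($L$-bounded support $\Rightarrow$ $g^*$ is $L$-Lipschitz). If $g$ has $L$-bounded support, then every $\uv$ with $\norm{\uv} > L$ has $g(\uv) = +\infty$ and hence contributes $-\infty$ to the supremum, so I can restrict it: $g^*(\vv) = \sup_{\norm{\uv}\le L} \vv^\top\uv - g(\uv)$ (this is well defined since $g$ is proper). This exhibits $g^*$ as a pointwise supremum of maps $\vv \mapsto \vv^\top\uv - g(\uv)$ that are each $L$-Lipschitz in $\vv$, because the slope $\uv$ satisfies $\norm{\uv}\le L$. To make this rigorous for fixed $\vv_1,\vv_2$ and any $\epsilon>0$, I would pick an $\epsilon$-maximizer $\uv_\epsilon$ with $\norm{\uv_\epsilon}\le L$ for $g^*(\vv_1)$; using $\uv_\epsilon$ as a feasible point in the supremum defining $g^*(\vv_2)$ gives $g^*(\vv_1) - g^*(\vv_2) \le (\vv_1-\vv_2)^\top\uv_\epsilon + \epsilon \le L\norm{\vv_1-\vv_2} + \epsilon$ by Cauchy--Schwarz. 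Symmetry in $\vv_1,\vv_2$ and letting $\epsilon\to 0$ yields the Lipschitz bound. The $\epsilon$-maximizer is needed precisely because the supremum may fail to be attained.

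\textbf{Reverse direction} ($g^*$ is $L$-Lipschitz $\Rightarrow$ $g$ has $L$-bounded support). Here I would argue by contradiction using the biconjugate. Suppose some $\uv_0$ lies in the domain of $g$ with $\norm{\uv_0} > L$. Since $g = g^{**}$ for closed proper convex $g$, we have $g(\uv_0) = \sup_{\vv} \vv^\top\uv_0 - g^*(\vv)$. I would evaluate this supremum only along the ray $\vv = t\,\uv_0/\norm{\uv_0}$, $t\ge 0$, and exploit the Lipschitz bound $g^*(\vv) \le g^*(\0) + L\norm{\vv} = g^*(\0) + Lt$. This gives $\vv^\top\uv_0 - g^*(\vv) \ge t\norm{\uv_0} - g^*(\0) - Lt = t(\norm{\uv_0} - L) - g^*(\0)$, which tends to $+\infty$ as $t\to\infty$ because $\norm{\uv_0} > L$. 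Hence $g(\uv_0) = +\infty$, contradicting $\uv_0 \in \mathrm{dom}\,g$.

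\textbf{Main obstacle.} The core estimates are immediate from Cauchy--Schwarz and the Lipschitz inequality; the delicate points are technical. The reverse direction uses $g^{**}=g$, which holds exactly only when $g$ is also closed (lower semicontinuous). For a merely proper convex $g$ one has $g^{**}=\overline{g}$, the closed hull, and $g^*=(\overline{g})^*$; since $\mathrm{dom}\,g \subseteq \mathrm{dom}\,\overline{g}$ and the conclusion $\norm{\,\cdot\,}\le L$ is a closed condition, running the argument on $\overline{g}$ still delivers the claim for $g$. I expect this closure bookkeeping, together with the non-attainment of the suprema handled by $\epsilon$-maximizers in the forward direction, to be the only genuine subtleties.
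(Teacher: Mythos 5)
The paper never proves this lemma itself: it is imported verbatim from \cite{Dunner:2016vga} and used as a black box, so there is no internal proof to compare yours against, and a self-contained argument is genuinely added value. On its merits, your proof is correct and is the standard conjugate-duality argument. The forward direction (restrict the supremum defining $g^*$ to $\norm{\uv}\le L$, then compare $g^*(\vv_1)$ and $g^*(\vv_2)$ through an $\epsilon$-maximizer and Cauchy--Schwarz) is exactly right, including your point that attainment of the supremum may fail. The reverse direction is also sound: $g^*=(\overline{g})^*$, Fenchel--Moreau applied to the closed proper convex $\overline{g}$ gives $\overline{g}=(\overline{g})^{**}$, the evaluation along the ray $\vv=t\,\uv_0/\norm{\uv_0}$ together with $g^*(\vv)\le g^*(\0)+Lt$ forces $\overline{g}(\uv_0)=+\infty$ whenever $\norm{\uv_0}>L$, and then $\overline{g}\le g$ yields $\mathrm{dom}\,g\subseteq\mathrm{dom}\,\overline{g}\subseteq\{\uv : \norm{\uv}\le L\}$; note it is this domain inclusion, not the closedness of the norm ball, that finishes the argument, so your ``closed condition'' remark is dispensable. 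The one omission is in the forward direction: to call $g^*$ $L$-Lipschitz it must be finite-valued, and properness of $g$ only rules out the value $-\infty$. Finiteness does hold, because every proper convex function admits an affine minorant and is therefore bounded below on its $L$-bounded domain, so the restricted supremum is finite at every $\vv$ (equivalently: your $\epsilon$-maximizer estimate shows $g^*$ is either finite everywhere or $+\infty$ everywhere, and the affine minorant excludes the latter). Adding that one line makes the proof complete.
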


\subsection{Our primal-dual setup}
\vspace{-1mm}
In this paper, we develop coordinate descent methods for the following primal-dual optimization pair:
\begin{align}
&\min_{\alphav\in \R^n} \Big [ \OA(\alphav):= f(A\alphav) + \sum_i g_i(\alpha_i) \Big ], ~\tag{A}\label{eq:A}\\
&\min_{\wv \in \R^d} \Big [ \OB(\wv):= f^*(\wv) + \sum_i g_i^*(-\av_i^\top\wv) \Big ], ~\tag{B}\label{eq:B}
\end{align}
where we have $A = [\av_1, \dots, \av_n]$. 

Our primal-dual template generalizes the standard primal-dual SDCA setup in \cite{ShalevShwartz:2013wl}. As a result, we can provide gap certificates for the quality of the numerical solutions while being applicable to broader set of problems. %
\vspace{-1mm}
\paragraph{Optimality conditions.}
The first-order optimality conditions for the problems \eqref{eq:A} and \eqref{eq:B} are given by
\begin{equation}
\label{opt_conditions}
\begin{aligned}
&\wv \in  \partial f(A \alphav), \hspace{7mm} -\av_i^\top \wv \in  \partial g_i(\alpha_i) &\forall i \in [n] \\
&A\alphav \in \partial f^*(\wv), \hspace{7mm}  \alpha_i \in   \partial g_i^*(-\av_i^\top \wv) &\forall i \in [n]
\end{aligned}
\end{equation}
For a proof, see \cite{Bauschke:2011ik}.

\paragraph{Duality gap.}
The duality gap is the difference between primal and dual solutions: 
\begin{equation}
\label{duality_gap}
G(\alphav,\wv) := \OA(\alphav) - (-\OB(\wv)),
\end{equation}
which provides a certificate on the approximation accuracy both the primal and dual objective values. 

While always non-negative, under strong duality the gap reaches zero only in an optimal pair $(\alphav^\star,\wv^\star)$. When $f$ is differentiable the optimality conditions \eqref{opt_conditions} write as $\wv^\star=  \wv(\alphav^\star) = \nabla  f(A \alphav^\star)$.
We will rely on the following relationship in our applications:
\[
\wv=\wv(\alphav) := \nabla f(A \alphav)\ .
\]%
Choosing this mapping allows for running existing algorithms based on $\alphav$ alone, without the algorithm needing to take care of $\wv$. Nevertheless, the mapping allows us to express the gap purely in terms of the original variable $\alphav$, at any time: We define $G(\alphav) :=G(\alphav,\wv(\alphav)) = \OA(\alphav) - (-\OB(\wv(\alphav)))$.

\paragraph{Coordinate-wise duality gaps.}
\label{sec:cw-gaps}
For our problem structure of partially separable problems \eqref{eq:A} and \eqref{eq:B}, it is not hard to show that the duality gap can be written as a sum of coordinate-wise gaps:
\begin{equation}
\label{eq:gap_decomposition}
G(\alphav) = \sum_i G_i(\alpha_i) := \sum_i \Big (  g_i^*(-\av_i^\top\wv) + g_i(\alpha_i) + \alpha_i \av_i^\top \wv \Big)
\end{equation}
The relation holds since our mapping $\wv = \nabla f(A\alphav)$ invokes the Fenchel-Young inequality for $f$ with an equality. Moreover, the Fenchel-Young inequality for $g_i$ implies that all $G_i(\alpha_i)$'s are non-negative.

\subsection{Dual residuals}
\vspace{-1mm}
We base our fixed non-uniform and adaptive schemes on the concept of ``dual residual,'' i.e., a measure of progress to the optimum of the dual variables $\alphav$. Here we assume that $\wv = \nabla f(A\alphav)$.
\begin{definition}[Dual Residual. A generalization of  \cite{Csiba:2015ue}]
\label{kappa} 
Consider the primal-dual setting \eqref{eq:A}-\eqref{eq:B}. Let each $g_i$ be $\mu_i$-strongly convex with convexity parameter $\mu_i \geq 0$ $\forall i \in [n]$. For the case $\mu_i = 0$ we require $g_i$ to have a bounded support. Then, given $\alphav$, the $i$-th dual residue on iteration $t$ is given by:  $$ \kappa_i^{(t)} := \min_{u \in \partial g_i^*(-\av_i^\top\wv^{(t)})} |u - \alpha_i^{(t)}|.$$
\end{definition}

\begin{remark}
Note that for $u$ to be well defined, i.e., the subgradient in (6) not to be empty, we need the domain of $g^*$ to be the whole space. For $\mu>0$ this is given by strong convexity of $g_i$, while for $\mu_i = 0$ this follows from the bounded support assumption on $g_i$.
\end{remark}

 \begin{definition}[Coherent probability vector, \cite{Csiba:2015ue}] We say that probability vector $\pv^{(t)} \in \R^n$ is coherent with the dual residue vector $\kappav^{(t)}$ if for all $i \in [n]$, we have 
 $ \kappa_i^{(t)} \neq 0 \hspace{0.1cm} \Rightarrow \hspace{0.1cm} p_i^{(t)} > 0.$
\end{definition}

 \begin{definition}[$t$-support set] 
 \label{sset}
 We call the set\vspace{-2mm}
 $$ I_t := \{ i \in [n] : \kappa_i^{(t)} \neq 0 \}  \subseteq [n]\vspace{-2mm}$$ 
 a $t$-support set.
\end{definition}

\begin{lemma} 
\label{kappa_bound} 
Suppose that for each $i$, $g_i^*$ is $L_i$-Lipschitz. 
Then, $\forall i:$ $|\kappa_i| \leq 2L_i$.
\vspace{-2mm}
\end{lemma}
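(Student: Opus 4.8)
The plan is to bound the two quantities appearing in the definition of $\kappa_i$ separately and then combine them via the triangle inequality. Recall that
\[
\kappa_i = \min_{u \in \partial g_i^*(-\av_i^\top\wv)} |u - \alpha_i|,
\]
so it suffices to exhibit a single subgradient $u$ for which $|u - \alpha_i| \leq 2L_i$; the minimum can only be smaller. By the remark following Definition \ref{kappa}, this subdifferential is non-empty, so such a $u$ exists.

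First I would control the iterate $\alpha_i$. By Lemma \ref{duality_lip-bound}, the hypothesis that $g_i^*$ is $L_i$-Lipschitz is equivalent to $g_i$ having $L_i$-bounded support. Since $\alpha_i$ is a feasible point of the primal problem \eqref{eq:A}, we must have $g_i(\alpha_i) < +\infty$, and the bounded-support property then forces $|\alpha_i| \leq L_i$.

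Next I would control the subgradient $u$. Any $u \in \partial g_i^*(-\av_i^\top\wv)$ satisfies, by the conjugate subgradient correspondence, $-\av_i^\top \wv \in \partial g_i(u)$; in particular $u$ lies in the domain of $g_i$, so again by $L_i$-bounded support $|u| \leq L_i$. (Equivalently, one may argue directly that the subgradients of an $L_i$-Lipschitz convex function are bounded in norm by $L_i$, which is the standard dual statement of Lipschitzness.)

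Combining the two bounds gives $|u - \alpha_i| \leq |u| + |\alpha_i| \leq 2L_i$, and hence $\kappa_i \leq 2L_i$; since $\kappa_i \geq 0$ by construction, this yields $|\kappa_i| \leq 2L_i$. The only real subtlety is justifying that both $u$ and $\alpha_i$ genuinely lie in the bounded domain of $g_i$: for $\alpha_i$ this is feasibility of the iterate, and for $u$ it is the conjugate subgradient correspondence together with non-emptiness of the subdifferential. Once these two membership facts are in place, the triangle inequality finishes the argument immediately.
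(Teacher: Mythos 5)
Your proposal is correct and follows essentially the same route as the paper's proof: bound $|\alpha_i| \leq L_i$ via Lemma \ref{duality_lip-bound} (Lipschitzness of $g_i^*$ $\Leftrightarrow$ bounded support of $g_i$), bound the subgradient $|u| \leq L_i$, and conclude by the triangle inequality. Your version is slightly more careful than the paper's—making explicit that the minimum over the subdifferential is dominated by any single element, that the subdifferential is non-empty, and that the iterate lies in the domain of $g_i$—but these are refinements of, not departures from, the same argument.
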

\begin{proof}
By Lemma \ref{duality_lip-bound}, the $L_i$-Lipschitzness $g_i^*$ implies $L_i$-bounded support of $g_i(\alpha_i)$ and therefore $|\alpha_i| \leq L_i$. By writing Lipschitzness as bounded subgradient, %
$|u_i| \leq L_i$, and $|\kappa_i| = |\alpha_i - u_i| \leq |\alpha_i| + |u_i| \leq 2L_i$.
\end{proof}

\subsection{Coordinate Descent}
Algorithm~\ref{CD} describes the Coordinate Descent (CD) method in the primal-dual setting \eqref{eq:A} and \eqref{eq:B}. The  method has $3$ major steps: Coordinate selection, line-search along the chosen coordinate, and primal-dual parameter updates. While the standard CD methods chooses the coordinates at random with fixed distibutions, we  develop adaptive strategies in the sequel that change the sampling distribution per iteration. The other steps remain essentially the same. 

\begin{algorithm}
\caption{Coordinate Descent}
\label{CD}
\begin{algorithmic}[1]
\State Let $\alphav^{(0)}:=\0 \in \R^n, \wv^{(0)} := \wv(\alphav^{(0)})$
\For{ $t = 0,1,...,T$}
\State Sample $i \in [n]$ randomly according to $\pv^{(t)}$
\State Find $\Delta \alpha_i$ minimizing $\OA(\alphav^{(t)} + \ev_i\Delta \alpha_i)$
\State $\alphav^{(t+1)} := \alphav^{(t)} + \ev_i\Delta \alpha_i$
\State $\wv^{(t+1)} := \wv(\alphav^{(t+1)})$
\EndFor
\end{algorithmic}
\end{algorithm}
\vspace{-2mm}

\section{Adaptive Sampling-based CD}
Our goal is to find a $\varepsilon_B$-suboptimal parameter $\wv$ or $\varepsilon_A$-suboptimal parameter~$\alphav$, i.e., $\OA(\alphav) - \OA(\alphav^\star) \leq \varepsilon_A$ or $\OB(\wv) - \OB(\wv^\star) \leq \varepsilon_B$, for the following pair of dual optimization problems \eqref{eq:A} and \eqref{eq:B}.

\subsection{Key lemma}
This subsection introduces a lemma that characterizes the relationship between any sampling distribution for the coordinates, denoted as $\pv$, and the convergence rate of the CD method. For this purpose, we build upon the \cite[Lemma 3]{Csiba:2015ue}  to relax the strong-convexity restrictions on $g_i$'s. That is, we derive a convergence result for the general convex $g_i$ with coordinate-dependent strong convexity constants $\mu_i$. In contrast to \cite[Lemma 3]{Csiba:2015ue}, we can have $\mu_i=0$ when $g_i$ has bounded support. 

\begin{lemma}
\label{lemma1}
Let $f$ be $1/\beta$-smooth and each $g_i$ be $\mu_i$-strongly convex with convexity parameter $\mu_i \geq 0$ $\forall i \in [n]$. For the case $\mu_i = 0$, we require $g_i$ to have a bounded support. Then for any iteration $t$, any sampling distribution $\pv^{(t)}$ and any arbitrary $s_i \in [0,1]$ $\forall i \in [n]$, the iterates of the CD method satisfy 
\begin{equation}
\label{eq1}
\begin{aligned}
&\E[\OA(\alphav^{(t+1)}) \,|\, \alphav^{(t)}] \leq \OA(\alphav^{(t)}) - \sum_{i} s_i p_i^{(t)} G_i(\alphav^{(t)}) \\&-\sum_{i} p_i^{(t)} \Big ( \frac{\mu_i(s_i-s_i^2)}{2} - \frac{s_i^2 \|\av_i\|^2}{2 \beta}  \Big ) \abs{\kappa_i^{(t)}}^2,
\end{aligned}
\end{equation}
here $\kappa_i^{(t)}$ is $i$-th dual residual (see Def. \ref{kappa}).
\vspace{-1mm}
\end{lemma}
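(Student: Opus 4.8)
The plan is to exploit that Algorithm~\ref{CD} performs an \emph{exact line search} along the chosen coordinate, so that once coordinate $i$ is selected we have $\OA(\alphav^{(t+1)}) \le \OA(\alphav^{(t)} + \ev_i\Delta\alpha_i)$ for \emph{any} trial step $\Delta\alpha_i$. This frees me to substitute a single convenient closed-form step, bound the resulting objective value deterministically, and only take the expectation over the random coordinate at the very end.

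Fix $t$ and write $\wv = \wv^{(t)}$. Let $u_i \in \partial g_i^*(-\av_i^\top\wv)$ be the subgradient attaining the dual residual, so that $|u_i - \alpha_i^{(t)}| = |\kappa_i^{(t)}|$; this is well-defined by the Remark, since the strong-convexity/bounded-support hypotheses force the domain of $g_i^*$ to be all of $\R$. I would take the trial step $\Delta\alpha_i := s_i(u_i - \alpha_i^{(t)})$, so $(\Delta\alpha_i)^2 = s_i^2|\kappa_i^{(t)}|^2$. For the smooth part I apply $1/\beta$-smoothness of $f$ at $A\alphav^{(t)}$ along $\av_i\Delta\alpha_i$, giving $f(A\alphav^{(t)} + \av_i\Delta\alpha_i) \le f(A\alphav^{(t)}) + \Delta\alpha_i\,\av_i^\top\wv + \frac{s_i^2\|\av_i\|^2}{2\beta}|\kappa_i^{(t)}|^2$. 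For the separable part I use $\mu_i$-strong convexity of $g_i$ on the convex combination $\alpha_i^{(t)} + s_i(u_i - \alpha_i^{(t)}) = (1-s_i)\alpha_i^{(t)} + s_i u_i$, which yields $g_i(\alpha_i^{(t)} + \Delta\alpha_i) \le (1-s_i)g_i(\alpha_i^{(t)}) + s_i g_i(u_i) - \frac{\mu_i s_i(1-s_i)}{2}|\kappa_i^{(t)}|^2$; for $\mu_i = 0$ this degenerates to plain convexity, which is exactly where the bounded-support assumption substitutes for strong convexity.

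The key step is collecting the first-order terms into the coordinate-wise gap. Subtracting $\OA(\alphav^{(t)})$ and adding the two bounds, the coordinate-$i$ contribution is $s_i\big[(u_i - \alpha_i^{(t)})\av_i^\top\wv + g_i(u_i) - g_i(\alpha_i^{(t)})\big]$ plus the two terms quadratic in $\kappa_i^{(t)}$. Since $u_i \in \partial g_i^*(-\av_i^\top\wv)$, the Fenchel--Young inequality for $g_i$ holds with equality, $g_i(u_i) + g_i^*(-\av_i^\top\wv) = -u_i\,\av_i^\top\wv$; substituting this to eliminate $g_i(u_i)$ shows that the bracketed quantity equals exactly $-G_i(\alpha_i^{(t)})$, recalling $G_i(\alpha_i) = g_i^*(-\av_i^\top\wv) + g_i(\alpha_i) + \alpha_i\av_i^\top\wv$ from \eqref{eq:gap_decomposition}. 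I expect this algebraic identification to be the main obstacle, since it is where the particular choice of $u_i$ and the Fenchel--Young equality must conspire precisely to reconstruct $G_i$. This delivers, for the selected $i$, the deterministic bound $\OA(\alphav^{(t+1)}) \le \OA(\alphav^{(t)}) - s_i G_i(\alpha_i^{(t)}) - \big(\frac{\mu_i(s_i - s_i^2)}{2} - \frac{s_i^2\|\av_i\|^2}{2\beta}\big)|\kappa_i^{(t)}|^2$, and taking the conditional expectation $\sum_i p_i^{(t)}(\cdot)$ over the sampled coordinate gives \eqref{eq1}.
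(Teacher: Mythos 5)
Your proposal is correct and follows essentially the same route as the paper's own proof: bound the exact line-search update by the trial step $\Delta\alpha_i = s_i(u_i - \alpha_i^{(t)})$, apply $\mu_i$-strong convexity of $g_i$ and $1/\beta$-smoothness of $f$, use the Fenchel--Young equality at $u_i \in \partial g_i^*(-\av_i^\top\wv)$ to reassemble the first-order terms into $-s_i G_i(\alphav^{(t)})$, and take the conditional expectation over the sampled coordinate. The algebra you flag as the key identification is exactly the step the paper performs, so no gap remains.
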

The proof is provided in Appendix \ref{app:proof}.

\begin{remark}
\label{remark1}
If in addition to the conditions of Lemma \ref{lemma1} we require $\pv^{(t)}$ to be coherent with $\kappav^{(t)}$, then for any $\theta \in [0,\min_{i \in I_t} p_i^{(t)}]$ it holds that 
\begin{equation}
\label{eq.lemma1}
\E[\OA(\alphav^{(t+1)})\,|\,\alphav^{(t)}] \leq \OA(\alphav^{(t)}) - \theta G(\alphav^{(t)}) + \frac{\theta^2 n^2}{2} F^{(t)},
\end{equation}
\begin{equation*}
 F^{(t)} := \frac{1}{n^2 \beta \theta}\sum_{i \in I_t} \Big ( \frac{\theta (\mu_i \beta + \|\av_i\|^2)}{p_i^{(t)}}  - \mu_i \beta \Big ) \abs{\kappa_i^{(t)}}^2.
\end{equation*}
\end{remark}
\begin{proof}
Since $s_i$  in Lemma \ref{lemma1} is an arbitrary number $\in [0,1]$, we take  $s_i= \frac{\theta}{p_i^{(t)}}$ for points with $i \in I_t$ and $s_i=0$ for all other points, here $\theta \in [0, \min_i p_i^{(t)}]$. Then,~\eqref{eq1} becomes the following, finalizing the proof:
$$\begin{aligned}
&\E[\OA(\alphav^{(t+1)})\,|\,\alphav^{(t)}] \leq \OA(\alphav^{(t)}) - \theta \sum_{i \in I_t} G_i(\alphav^{(t)}) \\&- \sum_{i \in I_t} \Big ( \frac{\mu_i \theta}{2} - \frac{\theta^2}{p_i^{(t)}}\frac{\mu_i \beta + \|\av_i\|^2}{2 \beta}  \Big ) \abs{\kappa_i^{(t)}}^2 \\ 
&= \OA(\alphav^{(t)}) -\theta G(\alphav^{(t)}) \\&- \frac{\theta}{2 \beta}\sum_{i \in I_t} \Big ( \mu_i \beta -  \frac{\theta (\mu_i \beta + \|\av_i\|^2)}{p_i^{(t)}} \Big ) \abs{\kappa_i^{(t)}}^2
\end{aligned}\vspace{-1em}$$
\end{proof}

\vspace{-6mm}
\subsection{Why is the generalization important?}
\label{sec:genconv}

Consider the key lemma with $\mu_i = 0$. Then, Remark~\ref{remark1} implies the following:\vspace{-1mm}
\begin{equation}
\label{bound}
\E[\OA(\alphav^{(t+1)})\,|\,\alphav^{(t)}] \leq \OA(\alphav^{(t)}) - \theta G(\alphav^{(t)}) + \frac{\theta^2 n^2}{2} F^{(t)},
\end{equation}
where 
\begin{equation} \label{F_genconv}
F^{(t)}  := \frac{1}{n^2 \beta}\sum_{i \in I_t} \Big ( \frac{  \abs{\kappa_i^{(t)}}^2 \|\av_i\|^2}{p_i^{(t)}} \Big ).
\end{equation}
Contrary to the strongly convex case, $F^{(t)}$ is  positive. Hence, the sampling distributions derived in \cite{Csiba:2015ue}  with strongly convex $g_i$ are not optimal. 

The following theorem generalizes \cite[Theorem 5]{Zhao:2014vg}, and \cite[Theorem 9]{Dunner:2016vga} to allow adaptive sampling: %
\begin{theorem}
\label{thm:main}
Assume $f$ is a $\frac{1}{\beta}$-smooth function. Then, if $g^*_i$ is $L_i$-Lipschitz for each $i$ and $\pv^{(t)}$ is coherent with $\kappav^{(t)}$, then the CD iterates satisfy
\vspace{-1mm}
\begin{equation}
\label{eq:bound1}
\E[\varepsilon_A^{(t)}] \leq \frac{2F^{\circ}n^2 + \frac{2\varepsilon_A^{(0)}}{p_{\min}}}{\frac{2}{p_{\min}} + t}.
\vspace{-1mm}
\end{equation}
Moreover, we obtain a \emph{duality gap} $G(\bar{\alphav}) \leq \varepsilon$ %
after an overall number of iterations $T$ whenever
\begin{equation}
\label{eq:theorem1}
T \geq \max\Bigg \{0, \frac{1}{p_{\min}} \log \Big (\frac{2 \varepsilon_A^{(0)}}{n^2 p_{\min} F^{\circ}} \Big) \Bigg\} + \frac{5F^{\circ}n^2}{\varepsilon} - \frac{1}{p_{\min}}.
\end{equation}

Moreover, when $t \geq T_0$ with 
\begin{equation}
\label{eq:theorem2}
T_0 := \max\Bigg \{0, \frac{1}{p_{\min}} \log \Big (\frac{2 \varepsilon_A^{(0)}}{n^2 p_{\min} F^{\circ}} \Big) \Bigg\}  +  \frac{4 F^{\circ} n^2}{\varepsilon} - \frac{2}{p_{\min}}
\end{equation}
we have the \emph{suboptimality} bound of  $\E[\OA(\alphav^{(t)}) - \OA(\alphav^{\star})]  \leq \varepsilon/2$.
Here $\varepsilon^{(0)}_A$ is the initial dual suboptimality and $F^{\circ}$ is an upper bound on $\E[F^{(t)}]$ taken over the random choice of the sampled coordinate at $1,\dots,T_0$ algorithm iterations.%
\end{theorem}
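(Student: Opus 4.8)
The plan is to reduce the per-iteration guarantee of Remark~\ref{remark1} in the general convex case $\mu_i=0$ (inequality \eqref{bound}, with $F^{(t)}$ as in \eqref{F_genconv}) to a scalar recursion on the expected primal suboptimality, and then run the standard two-phase analysis for $O(1/t)$ stochastic coordinate descent. Writing $\varepsilon_A^{(t)} := \OA(\alphav^{(t)}) - \OA(\alphav^\star) \ge 0$, I would subtract $\OA(\alphav^\star)$ from \eqref{bound}, use the fact that the duality gap dominates the primal suboptimality, $G(\alphav^{(t)}) \ge \varepsilon_A^{(t)}$, and bound $\E[F^{(t)}] \le F^{\circ}$. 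Taking total expectations then gives, for any admissible step $\theta \le p_{\min}$,
\[
\E[\varepsilon_A^{(t+1)}] \le (1-\theta)\,\E[\varepsilon_A^{(t)}] + \tfrac{\theta^2 n^2}{2} F^{\circ}.
\]
This one-dimensional recursion is the workhorse for all three claims.

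For the suboptimality bound \eqref{eq:bound1} I would set the time-varying step $\theta_t := \frac{2}{2/p_{\min} + t}$, which satisfies $\theta_t \le p_{\min}$ for every $t \ge 0$ (since $p_{\min}\le 1$), and prove \eqref{eq:bound1} by induction on $t$ with $\gamma_t := 2/p_{\min}+t$ and numerator $C := 2F^{\circ}n^2 + 2\varepsilon_A^{(0)}/p_{\min}$. The base case $t=0$ is immediate because $C/\gamma_0 = F^{\circ}n^2 p_{\min} + \varepsilon_A^{(0)} \ge \varepsilon_A^{(0)}$. In the inductive step, plugging $\theta_t$ into the recursion and using $C \ge 2F^{\circ}n^2$ collapses the bound to $C(\gamma_t-1)/\gamma_t^2 \le C/\gamma_{t+1}$, which is the elementary inequality $\gamma_t^2 - 1 \le \gamma_t^2$. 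Thus only the single inequality $C \ge 2F^{\circ}n^2$ is needed, and it holds by construction.

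The duality-gap statement \eqref{eq:theorem1} and the refined $\varepsilon/2$-suboptimality threshold \eqref{eq:theorem2} require the two-phase refinement. In a first, linearly convergent phase I would run with a constant step $\theta = p_{\min}$; the recursion then contracts $\E[\varepsilon_A^{(t)}]$ geometrically at rate $1-p_{\min}$ toward the noise floor $\tfrac12 p_{\min} n^2 F^{\circ}$, so that after roughly $\frac{1}{p_{\min}}\log\!\big(\tfrac{2\varepsilon_A^{(0)}}{n^2 p_{\min} F^{\circ}}\big)$ steps the suboptimality has dropped to the scale of that floor. This is exactly the source of the logarithmic term in $T_0$, replacing a crude linear dependence on $\varepsilon_A^{(0)}$; combining it with the sublinear bound \eqref{eq:bound1} on the remaining window yields \eqref{eq:theorem2}. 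To pass from suboptimality to a \emph{gap} certificate I would telescope the scalar recursion over the window $[T_0, T]$ with a suitable constant $\theta$: since $\theta\,G(\alphav^{(t)}) \le \E[\varepsilon_A^{(t)}] - \E[\varepsilon_A^{(t+1)}] + \tfrac{\theta^2 n^2}{2}F^{\circ}$, summing and discarding $\E[\varepsilon_A^{(T+1)}]\ge 0$ bounds the \emph{average} (hence the minimal, or the averaged-iterate $\bar\alphav$) gap by $\frac{\E[\varepsilon_A^{(T_0)}]}{\theta (T-T_0+1)} + \frac{\theta n^2 F^{\circ}}{2}$; balancing $\theta$ against the window length and the target $\varepsilon$, subject to $\theta \le p_{\min}$, delivers \eqref{eq:theorem1}.

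I expect the main obstacle to be the gap conversion rather than the suboptimality induction. The duality gap need not decrease monotonically even though $\E[\varepsilon_A^{(t)}]$ does, so one cannot simply invoke \eqref{eq:bound1}; the argument must control an \emph{average} of gaps over a window and choose $\theta$ to simultaneously tame the residual suboptimality $\E[\varepsilon_A^{(T_0)}]$ and the per-step noise $\tfrac{\theta^2 n^2}{2}F^{\circ}$, all while respecting the feasibility constraint $\theta \le p_{\min}$. Carrying the bookkeeping through the two phases so that the dependence on $\varepsilon_A^{(0)}$ stays logarithmic, and so that the distinct constants $5F^{\circ}n^2/\varepsilon$ (for the gap horizon $T$) and $4F^{\circ}n^2/\varepsilon$ (for the $\varepsilon/2$ threshold $T_0$) come out cleanly, is the delicate part of the calculation.
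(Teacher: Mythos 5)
Your proposal is correct and follows essentially the same route as the paper's proof: the same scalar recursion $\E[\varepsilon_A^{(t+1)}] \le (1-\theta)\,\E[\varepsilon_A^{(t)}] + \tfrac{\theta^2 n^2}{2}F^{\circ}$ obtained from Remark~\ref{remark1} via weak duality, the same two-phase analysis (a geometric phase with constant step $\theta = p_{\min}$ producing the logarithmic term $t_0$, followed by decreasing steps giving the $O(1/t)$ decay), and the same telescoping-plus-averaging argument to convert suboptimality into a duality-gap certificate with the identical bookkeeping yielding the $4F^{\circ}n^2/\varepsilon$ and $5F^{\circ}n^2/\varepsilon$ constants. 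Your direct induction for \eqref{eq:bound1} with the inflated constant $C = 2F^{\circ}n^2 + 2\varepsilon_A^{(0)}/p_{\min}$ is precisely the paper's observation that replacing $F^{\circ}$ by $F^{\circ} + \varepsilon_A^{(0)}/(n^2 p_{\min})$ makes the two-phase bound hold with $t_0=0$, so the two derivations coincide.
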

\vspace{-1mm}
The proof is provided in Appendix \ref{app:proof}.

\begin{remark}
We recover \cite[Theorem 9]{Dunner:2016vga} as a special case of Theorem \ref{thm:main} by setting $p_i^{(t)} = \frac{1}{n}$. We recover \cite[Theorem 5]{Zhao:2014vg} by setting $p_i^{(t)}=\frac{L_i}{\sum_j L_j}$.
\end{remark}

\subsubsection{Strategy I: Gap-wise sampling}
\label{gapwise_theory}
Based on the results above, we first develop sampling strategies for the CD method based on the decomposibly of the duality gap, i.e., sampling each coordinate according to its duality gap. 

\begin{definition}[Nonuniformity measure, \cite{Osokin:2016tp}]
\label{nonuniformity}
The nonuniformity measure $\chi(\xv)$ of a vector $\xv \in \R^n$, is defined as:
$$ \chi(\xv) := \sqrt{1+n^2 \text{Var}[\pv]},$$
where $\pv := \frac{\xv}{\| \xv\|_1}$ is the normalized probability vector. %
\end{definition}
\begin{lemma}
\label{lemmaNormRelation}
Let $\xv \in \R^n_+.$ Then, it holds that
\vspace{-1mm}
\[\|\xv\|_2 =  \frac{\chi(\xv)}{\sqrt{n}} \|\xv\|_1.
\vspace{-2mm}
\]
\end{lemma}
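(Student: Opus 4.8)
The plan is to unwind the definition of the nonuniformity measure and reduce the claim to an elementary identity for $\sum_i p_i^2$. First I would record that since $\xv \in \R^n_+$ (and $\xv \neq \0$, so that the normalization is well defined), the vector $\pv = \xv/\|\xv\|_1$ has nonnegative entries summing to one; in particular its coordinate mean is $\bar p = \frac1n \sum_i p_i = \frac1n$, a value forced purely by the probability constraint rather than by the data.

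Next I would expand the (uniform-weight) sample variance about this mean:
\[
\text{Var}[\pv] = \frac1n \sum_i \Big(p_i - \tfrac1n\Big)^2 = \frac1n \sum_i p_i^2 - \frac{1}{n^2},
\]
where the cross term vanishes because $\sum_i p_i = 1$. Substituting into $\chi(\xv)^2 = 1 + n^2 \,\text{Var}[\pv]$, the additive constants cancel and I am left with the clean relation $\chi(\xv)^2 = n \sum_i p_i^2 = n\|\pv\|_2^2$.

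Finally I would pass back from $\pv$ to $\xv$. Since $p_i = x_i / \|\xv\|_1$, we have $\|\pv\|_2^2 = \|\xv\|_2^2 / \|\xv\|_1^2$, hence $\chi(\xv)^2 = n\, \|\xv\|_2^2/\|\xv\|_1^2$. Taking nonnegative square roots (all three quantities $\chi(\xv)$, $\|\xv\|_2$, $\|\xv\|_1$ are $\geq 0$) and rearranging yields $\|\xv\|_2 = \frac{\chi(\xv)}{\sqrt n}\|\xv\|_1$, as claimed. I do not expect any genuine obstacle here; the only points requiring care are the bookkeeping convention that $\text{Var}[\pv]$ denotes the unweighted variance of the $n$ coordinates (whose mean is pinned to $1/n$ by normalization) and the implicit assumption $\xv \neq \0$ needed to define $\pv$.
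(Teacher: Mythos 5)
Your proposal is correct and follows essentially the same route as the paper's proof: both reduce the claim to the variance identity $\text{Var}[\pv] = \frac{1}{n}\|\pv\|_2^2 - \frac{1}{n^2}$ (your expansion about the mean $1/n$ is just this identity written out, since $\E[\pv] = 1/n$) and then substitute into the definition of $\chi$. The only difference is that you spell out the cancellation and the passage back from $\pv$ to $\xv$, including the harmless caveat $\xv \neq \0$, which the paper leaves implicit.
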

\begin{proof}
The proof follows from Def. \ref{nonuniformity} and \vspace{-1mm}
$$\textstyle\text{Var} [\pv] = \E[\pv^2] - \E[\pv]^2 = \frac{1}{n} \|\pv\|_2^2 - \frac{1}{n^2}.\vspace{-5mm}$$
\end{proof}
\begin{theorem}
\label{theorem_gapwise}
Let $f$ be a $\frac{1}{\beta}$-smooth function. Then, if $g^*_i$ is $L_i$-Lipschitz for each $i$ and $p_i^{(t)} := \frac{G_i(\alphav^{(t)})}{G(\alphav^{(t)})}$, then the iterations of the CD method satisfies
\begin{equation}
\E[\varepsilon_A^{(t)}] \leq \frac{2F^{\circ}_g n^2+2n\varepsilon_A^{(0)}}{t+2n},
\end{equation}
where $F^{\circ}_g$ is an upper bound on $\E\Big[ F^{(t)}_g\Big],$ where the expectation is taken over the random choice of the sampled coordinate at iterations $1,\dots, t$ of the algorithm. Here $\overrightarrow{\Gv}$ and $\overrightarrow{\Fv}$ are defined as follows:
$$\overrightarrow{\Gv}:= (G_i(\alphav^{(t)}))_{i=1}^n, \hspace{0.5cm} \overrightarrow{\Fv}:= ( \|\av_i \|^2 \abs{\kappa_i^{(t)}}^2)_{i=1}^n,$$
and $F^{(t)}_g$ is defined analogously to \eqref{F_genconv}:
\vspace{-1mm}
\begin{equation}
\label{F_gap}
 F^{(t)}_g:=\frac{\chi(\overrightarrow{\Fv})}{n \beta (\chi(\overrightarrow{\Gv}))^3} \sum_i \|\av_i\|^2 \abs{\kappa_i^{(t)}}^2. 
\vspace{-2mm}
\end{equation}
\end{theorem}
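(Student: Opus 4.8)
The plan is to specialize the one-step inequality of Lemma~\ref{lemma1} (taken with every $\mu_i=0$, which is legitimate: by Lemma~\ref{duality_lip-bound} the $L_i$-Lipschitzness of $g_i^*$ gives $g_i$ bounded support) to the gap-wise choice $p_i^{(t)}=G_i(\alphav^{(t)})/G(\alphav^{(t)})$. The crucial departure from Remark~\ref{remark1} lies in how the free parameters $s_i\in[0,1]$ are chosen. The substitution $s_i=\theta/p_i^{(t)}$ used there would produce, for this $\pv^{(t)}$, a term proportional to $\tfrac{G}{n^2\beta}\sum_i \|\av_i\|^2|\kappa_i^{(t)}|^2/G_i$, which blows up whenever some coordinate gap $G_i$ is tiny. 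Instead I would take a single uniform value $s_i=s\in[0,1]$ for all $i$. With $\mu_i=0$ and this choice, Lemma~\ref{lemma1} reads $\E[\varepsilon_A^{(t+1)}\mid\alphav^{(t)}]\le \varepsilon_A^{(t)} - s\sum_i p_i^{(t)}G_i + \tfrac{s^2}{2\beta}\sum_i p_i^{(t)}\|\av_i\|^2|\kappa_i^{(t)}|^2$.

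Next I would evaluate the two sums under $p_i^{(t)}=G_i/G$, writing $\overrightarrow{\Gv}=(G_i)_i$ and $\overrightarrow{\Fv}=(\|\av_i\|^2|\kappa_i^{(t)}|^2)_i$. The progress term is $s\sum_i G_i^2/G = s\,\|\overrightarrow{\Gv}\|_2^2/\|\overrightarrow{\Gv}\|_1$, which by Lemma~\ref{lemmaNormRelation} equals $s\,\chi(\overrightarrow{\Gv})^2\,G/n$. For the error term, $\sum_i p_i^{(t)}\|\av_i\|^2|\kappa_i^{(t)}|^2 = \langle\overrightarrow{\Gv},\overrightarrow{\Fv}\rangle/\|\overrightarrow{\Gv}\|_1$; applying Cauchy--Schwarz and then Lemma~\ref{lemmaNormRelation} to each of $\|\overrightarrow{\Gv}\|_2$ and $\|\overrightarrow{\Fv}\|_2$ bounds it by $\tfrac{\chi(\overrightarrow{\Gv})\chi(\overrightarrow{\Fv})}{n}\|\overrightarrow{\Fv}\|_1$. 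Reparametrizing through $\theta:=s\,\chi(\overrightarrow{\Gv})^2/n$, the progress term becomes exactly $\theta\,G(\alphav^{(t)})$, and after tracking the powers of $\chi(\overrightarrow{\Gv})$ (the factor $\chi(\overrightarrow{\Gv})^{-4}$ coming from $s^2$ against the single $\chi(\overrightarrow{\Gv})$ produced by Cauchy--Schwarz, leaving $\chi(\overrightarrow{\Gv})^{-3}$) the error term collapses to precisely $\tfrac{\theta^2 n^2}{2}F_g^{(t)}$ with $F_g^{(t)}$ as in~\eqref{F_gap}. This delivers the recursion $\E[\varepsilon_A^{(t+1)}\mid\alphav^{(t)}]\le \varepsilon_A^{(t)}-\theta G(\alphav^{(t)})+\tfrac{\theta^2 n^2}{2}F_g^{(t)}$, which has exactly the shape of the recursion driving Theorem~\ref{thm:main}.

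The final step is a feasibility check followed by a reduction to Theorem~\ref{thm:main}. Since $\chi(\overrightarrow{\Gv})\ge 1$ always (Def.~\ref{nonuniformity}), choosing the constant step $\theta=1/n$ corresponds to $s=1/\chi(\overrightarrow{\Gv})^2\in(0,1]$, so the uniform-$s$ step is admissible at every iteration $t$ (with $s$ allowed to depend on $\alphav^{(t)}$). With $\theta=1/n$ the recursion above coincides with the one in Theorem~\ref{thm:main} after replacing $p_{\min}$ by $1/n$ and $F^{(t)}$ by $F_g^{(t)}$. I would therefore reuse verbatim the recursion-solving argument of that theorem (take expectations, bound $\E[F_g^{(t)}]\le F_g^\circ$, and unroll), so that substituting $1/p_{\min}=n$ and $F^\circ=F_g^\circ$ into~\eqref{eq:bound1} turns $\tfrac{2F^\circ n^2+2\varepsilon_A^{(0)}/p_{\min}}{2/p_{\min}+t}$ into the claimed $\tfrac{2F_g^\circ n^2+2n\varepsilon_A^{(0)}}{t+2n}$.

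The main obstacle is the error-term estimate. Because the natural $s_i=\theta/p_i^{(t)}$ choice is disastrous for gap-wise sampling, the entire argument rests on the uniform-$s$ trick combined with the Cauchy--Schwarz step, and on careful bookkeeping of the powers of $\chi(\overrightarrow{\Gv})$ so that the reparametrization lands exactly on $F_g^{(t)}$ and, crucially, makes $\theta=1/n$ feasible via $\chi\ge 1$. Once the recursion is in the Theorem~\ref{thm:main} form, the remainder is identical and requires no new work.
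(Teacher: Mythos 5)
Your proposal is correct and follows essentially the same route as the paper's proof: the paper likewise takes $\mu_i=0$ (justified via Lemma \ref{duality_lip-bound}), uses a uniform $s_i=s$, plugs in the gap-wise probabilities, and applies Cauchy--Schwarz together with Lemma \ref{lemmaNormRelation} and weak duality to arrive at exactly your one-step recursion. The only difference is packaging of the last step: where you reparametrize via $\theta = s\,\chi(\overrightarrow{\Gv})^2/n$ (feasible since $\chi(\overrightarrow{\Gv})\geq 1$) and then invoke the recursion-solving argument of Theorem \ref{thm:main} with $p_{\min}=1/n$ and $F^{\circ}=F^{\circ}_g$, the paper re-runs that induction explicitly with $s = \frac{2n}{(t+2n)\chi(\overrightarrow{\Gv})^2}$, i.e.\ $\theta=\frac{2}{t+2n}$ --- the same argument written out.
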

The proof is provided in Appendix \ref{app:proof}.

\paragraph{Gap-wise vs.\ Uniform Sampling:}
Here we compare the rates obtained by Theorem \ref{theorem_gapwise} for gap-wise sampling and Theorem \ref{thm:main} for uniform sampling. According to the Theorem \ref{thm:main}, the rate for any distribution can be written as follows
\vspace{-1mm}
\[
\E[\varepsilon_A^{(t)}] \leq \frac{2F^{\circ}n^2 + \frac{2\varepsilon_A^{(0)}}{p_{\min}}}{\frac{2}{p_{\min}} + t} = \frac{\frac{2}{\beta}\E\Big[ \sum_i \frac{  \abs{\kappa_i^{(t)}}^2 \|\av_i\|^2}{p_i^t}\Big] + \frac{2 \varepsilon_A^{(0)}}{p_{\min}} }{\frac{2}{p_{\min}} + t}.
\]
For the uniform distribution ($p_i = 1/n$), this yields
\begin{equation}
\E[\varepsilon_A^{(t)}] \leq \frac{\frac{2n}{\beta}\E\Big[ \sum_i \abs{\kappa_i^{(t)}}^2 \|\av_i\|^2\Big] + {2n \varepsilon_A^{(0)}} }{2n + t}.
\end{equation}
The rate of gap-wise sampling depends on non-uniformity measures $\chi(\overrightarrow{\Gv})$ and $\chi(\overrightarrow{\Fv})$:
$$ \E[\varepsilon_A^{(t)}] \leq \frac{\frac{2n}{\beta}\E\Big[\frac{\chi(\overrightarrow{\Fv})}{(\chi(\overrightarrow{\Gv}))^3}\sum_i \abs{\kappa_i^{(t)}}^2\|\av_i \|^2 \Big]+2n\varepsilon_A^{(0)}}{2n+t}.$$
In the best case for gap-wise sampling the variance in $( \abs{\kappa_i^{(t)}}^2 \|\av_i \|^2 )_{i=1}^n$ is $0$, $\chi(\overrightarrow{\Fv}) \approx 1$, and variance of gaps is maximal $\chi(\overrightarrow{\Gv}) \approx \sqrt{n}$. When this condition holds, the convergence rate becomes the following:
$$ \E[\varepsilon_A^{(t)}] \leq \frac{\frac{2}{\beta\sqrt{n}}\E\Big[\sum_i  \abs{\kappa_i^{(t)}}^2 \|\av_i \|^2 \Big]+2n\varepsilon_A^{(0)}}{2n+t}.$$
In the worst case scenario, when variance is maximal in $( \abs{\kappa_i^{(t)}}^2 \|\av_i \|^2 )_{i=1}^n$, $\chi(\overrightarrow{\Fv}) \approx \sqrt{n}$, the rate of gap-wise sampling is better than of uniform only when the gaps are non-uniform enough i.e., $\chi(\overrightarrow{\Gv}) \geq n^{\frac{1}{6}}$.

\subsection{Strategy II:  Adaptive  \& Uniform }
\label{mix_theory}
Instead of minimizing \eqref{eq:theorem1}, we here find an optimal sampling distribution as to minimize our bound:
\vspace{-1mm}
\begin{equation}
\label{iter}
T \geq \frac{5F^{\circ}n^2}{\varepsilon} + \frac{5\varepsilon_A^{(0)}}{\varepsilon p_{\min}}.
\end{equation}
The number of iterations $T$ is directly proportional to $F^{\circ}$ and $1/p_{\min}$.  Therefore, the optimal distribution $\pv$ should minimize $F^{\circ}$ and $1/p_{\min}$ at the same time. 

We denote the distribution minimizing $1/p_{\min}$ as \textit{supportSet uniform}, which is the following rule:
\vspace{-1mm}
\begin{equation}
\label{ssunifdist}
p_i^{(t)} :=
\begin{cases}
     \frac{1}{m_t},& \text{if } \kappa_i^{(t)} \neq 0 \\
     0,& \text{otherwise}.
\end{cases}
\end{equation}
Above, $m_t$ is a cardinality of the support set on iteration $t$. The distribution minimizing $F^{\circ}$, called \textit{adaptive}:
\vspace{-1mm}
\begin{equation} 
\label{adadist} 
p_i^{(t)} := \frac{\abs{\kappa_i^{(t)}} \|\av_i\|}{\sum_j |\kappa_j^{(t)}| \|\av_j\|}. 
\end{equation}
The mix of the two aforementioned distributions balances two terms and gives a good suboptimal $T$ in \eqref{iter}. We define mixed distribution as:
\begin{equation}
\label{mixdist}
p_i^{(t)} :=
\begin{cases}
     \frac{\sigma}{m_t} + (1-\sigma) \frac{\abs{\kappa_i^{(t)}} \|\av_i\|}{\sum_j |\kappa_j^{(t)}| \|\av_j\|},& \text{if } \kappa_i^{(t)} \neq 0 \\
     0,& \text{otherwise}
\end{cases}\vspace{-1mm}
\end{equation}
where $\sigma \in [0,1]$. This distribution gives us the following bounds on $F^{\circ}$ and $1/p_{\min}$:
\vspace{-1mm}
\[
F^{\circ}_{\text{mix}} \leq \frac{F^{\circ}_{\text{ada}}}{1-\sigma} \hspace{1cm} \frac{1}{p_{\min}} \leq \frac{m}{\sigma}.
\vspace{-1mm}
\]
and bound on the number of iterations:
\vspace{-1mm}
\begin{equation}
\label{mix_iter}
T \geq \frac{5F^{\circ}_{\text{ada}}n^2}{\varepsilon(1-\sigma)} + \frac{5\varepsilon_A^{(0)}m}{\varepsilon\sigma}.
\vspace{-1mm}
\end{equation}
Above $m:= \max_t m_t$. Since the process of finding $F^{\circ}_{\text{ada}}$ is rather problematic, a good $\sigma$ can be found by replacing $F^{\circ}_{\text{ada}}$ with its upper bound and minimizing \eqref{mixdist} w.r.t. $\sigma$. Another option is to use a ``safe'' choice of $\sigma = 0.5$, and provide a balance between two distributions. This strategy benefits from convergence  guarantees in case of unknown $F^{\circ}_{\text{ada}}$. In the applications section we use the latter option and call this sampling variant \textit{ada-uniform sampling}.

\subsection{Variations along the theme}
Based on the discussion above, we summarize our new variants of sampling schemes for Algorithm \ref{CD}:
\begin{itemize}
\item \textit{uniform} - sample uniformly at random.
\item \textit{supportSet uniform} - sample uniformly at random inside the support set, defined in \eqref{ssunifdist}. The  distribution is recomputed every iteration.
\item \textit{adaptive} - sample adaptively based on dual residual, defined in \eqref{adadist}. The  distribution is recomputed every iteration. 
\item \textit{ada-uniform} - sample based on a mixture between \textit{supportSet uniform} and \textit{adaptive}, defined in \eqref{mixdist}. The  distribution is recomputed every iteration. 
\item \textit{importance} - sample with a fixed non-uniform variant of \textit{adaptive} obtained by bounding $\kappa_i^{(t)}$ with $2L_i$ (Lemma \ref{kappa_bound}): $p_i := \frac{L_i \|\av_i\|}{\sum_j L_j \|\av_j\|}$. The distribution is computed only once. When the data is normalized, this sampling variant coincides with "importance sampling" of \cite{Zhao:2014vg}.\vspace{-1mm}
\item \textit{ada-gap} - sample randomly based on coordinate-wise duality gaps, defined in Section \ref{gapwise_theory}. The  distribution is recomputed every iteration. 
\item \textit{gap-per-epoch} - Use \textit{ada-gap} but with updates per-epoch. The gap-based  distribution is only recomputed at the beginning of each epoch and stays fixed during each epoch.
\end{itemize}
Full descriptions of the variants are  in Appendix \ref{app:algorithms}.
\section{Applications}
\label{applications}
\vspace{-1mm}

\subsection{Lasso and Sparse Logistic Regression}
The Lasso and L1-regularized Logistic Regression are quintessential problems with a general convex regularizer. Given a data matrix $A = [\av_1, \dots, \av_n] %
$ and a vector $\yv\in\R^d$, the Lasso is stated as:
\begin{equation}
\label{lasso_problem}
\min_{\alphav \in \R^n} %
\|A\alphav - \yv\|^2_2 + \lambda \|\alphav\|_1.
\end{equation} 
Both problems can easily be reformulated in our primal-dual setting \eqref{eq:A}-\eqref{eq:B}, by choosing $g_i(\alpha_i) := \lambda |\alpha_i|$. We have $f(A\alphav) :=  %
\|A\alphav - \yv\|^2_2$ for Lasso, and $f(A\alphav)$ is the logistic loss for classification respectively.
To our knowledge, there is no importance sampling or adaptive sampling techniques for  CD in this setting. %

\paragraph{Lipschitzing trick.}
In order to have duality gap convergence guarantees (Theorem \ref{thm:main}) we need $g_i^*$ to be Lipschitz continuous, which however is not the case for $g_i=|.|$ being the absolute value function. We modify the function $g_i$ without affecting the iterate sequence of CD using the ``Lipschitzing trick'' from \cite{Dunner:2016vga}, as follows.
 
According to Lemma \ref{duality_lip-bound}, a proper convex function~$g_i$ has bounded support if and only if $g_i^*$ is Lipschitz continuous. We modify $g_i(\alpha_i) = \lambda |\alpha_i|$ by restricting its support to the interval with radius $B:=\frac{1}{\lambda}(f(A\alphav^{(0)})+\lambda \|\alphav^{(0)}\|_1)$. Since Algorithm \ref{CD} is monotone, we can choose $B$ big enough to guarantee that $\alphav^{(t)}$ will stay inside the ball during optimization, i.e. that the algorithm's iterate sequence will not be affected by $B$. By modifying $g_i$ to bounded support of size $B$, we guarantee $g_i^*$ to be $B$-Lipschitz continuous.
\[
	\bar{g}_i(\alpha_i) := 
\begin{cases}
    \lambda |\alpha_i|, & \text{if } |\alpha_i| \leq B\\
    + \infty,              & \text{otherwise}
\end{cases}
\]
The conjugate of $\bar{g}_i$ is:\vspace{-1mm}
\[
\bar{g}_i^*(u_i) = \max_{\alpha_i : |\alpha_i| \leq B} u_i \alpha_i - \lambda |\alpha_i| = B \big [|u_i| - \lambda \big]_+ .
\]
\paragraph{Duality gap.}
Using the gap decomposition \eqref{eq:gap_decomposition} we obtain coordinate-wise duality gaps for modified Lasso and sparse logistic regression, which now depends on the chosen parameter~$B$:
\begin{align}
\label{eq:gap_decomposition_lasso}
G(\alphav) %
&= \sum_i \Big (  g_i^*(-\av_i^\top\wv) + g_i(\alpha_i) + \alpha_i \av_i^\top \wv \Big)\\ 
&= \sum_i \Big (  B\big [|\av_i^\top\wv| - \lambda \big]_+  +\lambda |\alpha_i| + \alpha_i \av_i^\top\wv \Big ).\notag
\end{align}

\subsection{Hinge-Loss SVM}
Our framework directly covers the original hinge-loss SVM formulation. 
The importance sampling technique \cite{Zhao:2014vg} are not applicable to the original hinge-loss, but relies on a smoothed version of the hinge-loss, changing the problem. %

When $\phi_i(.)$ is the hinge-loss, defined as $\phi_i(s) := [1 - sy_i ]_+$, our framework is directly applicable by mapping the SVM dual problem to our template \eqref{eq:A}, that is 
\begin{equation}
\label{svm_dual}
\min_{\alphav \in \R^n} \OA(\alphav): = \frac{1}{n} \sum_{i=1}^n \phi_i^*(-\alpha_i) + \frac{\lambda}{2} \Big \|\frac{1}{\lambda n}\sum_{i=1}^n\alpha_i\av_i \Big\|_2^2.
\end{equation}
The conjugate of the hinge-loss is $\phi_i^*(\alpha_i) = \alpha_i y_i $, with $\alpha_i y_i \in [0,1]$. In other words, $g_i^*(-\av_i^\top\wv) = \frac{1}{n}\phi_i(\av_i^\top\wv)$ in our notation, and $ f^*(\wv) = \frac{\lambda}{2} \|\wv\|_2^2$. %

\paragraph{Duality gap.} 
Section \ref{sec:cw-gaps} shows that the duality gap  decomposes into a sum of coordinate-wise gaps.

\subsection{Computational costs}
We discuss the computational costs of the proposed variants under the different sampling schemes. Table~\ref{table1} states the costs in detail, where \emph{nnz} is the number of non-zero entries in the data matrix $A$.
In the table, one \emph{epoch} means $n$ consecutive coordinate updates, where $n$ is the number features in the Lasso, and is the number of datapoints in the SVM.

\begin{table}[h]
\caption{A summary of computational costs}
\label{table1}
\centering
\begin{tabular}{|l| r|}
\hline
\textbf{Algorithm} & \textbf{Cost per Epoch}  \tabularnewline
\hline
uniform & $\mathcal O(\text{nnz})$  \tabularnewline
\hline
importance & $\mathcal O(\text{nnz} + n \log(n))$  \tabularnewline
\hline
gap-per-epoch & $\mathcal O(\text{nnz} + n \log(n))$  \tabularnewline
\hline
supportSet-uniform & $\mathcal O(n \cdot \text{nnz})$  \tabularnewline
\hline
adaptive & $\mathcal O(n \cdot \text{nnz})$   \tabularnewline
\hline
ada-uniform & $\mathcal O(n \cdot \text{nnz})$  \tabularnewline
\hline
ada-gap & $\mathcal O(n \cdot \text{nnz})$  \tabularnewline
\hline
\end{tabular}
\end{table}

\paragraph{Sampling and probability update}
In each iteration, we sample a coordinate %
from a non-uniform probability distribution. 
While the straightforward approach requires $\Theta(n)$ per sample, it is not hard to see that this can be improved to $\Theta(\log(n))$ when using a tree data structure to maintain the probability vector \cite{Nesterov:2013fn,ShalevShwartz:2016va}. The tree structure can be built in $\mathcal O(n \log(n))$.

\paragraph{Variable update and distribution generation}
Computing all dual residuals $\kappa_i$ or all coordinate-wise duality gaps $G_i$ is as expensive as an epoch of the classic CD method, i.e., we need to do $\Theta(\text{nnz})$ operations (one matrix-vector multiplication). 
In contrast, updating one coordinate $\alpha_i$ is cheap, being $\Theta(\text{nnz}/n)$.
 
\begin{figure*}
\centering
\rotatebox[origin=t]{90}{Lasso, \textit{\small rcv1*}}
\begin{subfigure}{.24\textwidth}
  \centering
\includegraphics[width=0.9\linewidth]{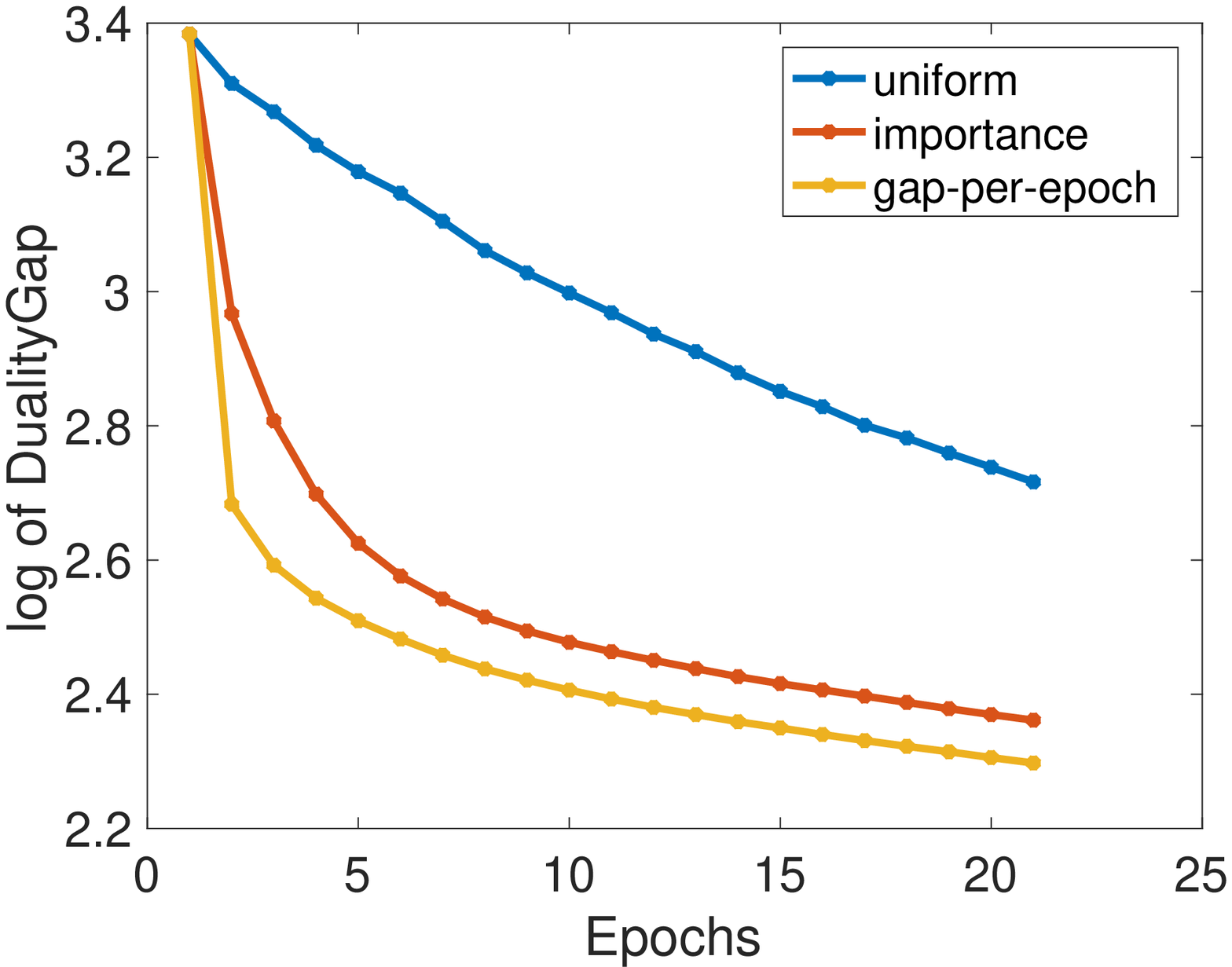}
  \caption{Gap, fixed distr.}
\end{subfigure}%
\begin{subfigure}{.24\textwidth}
  \centering
\includegraphics[width=0.9\linewidth]{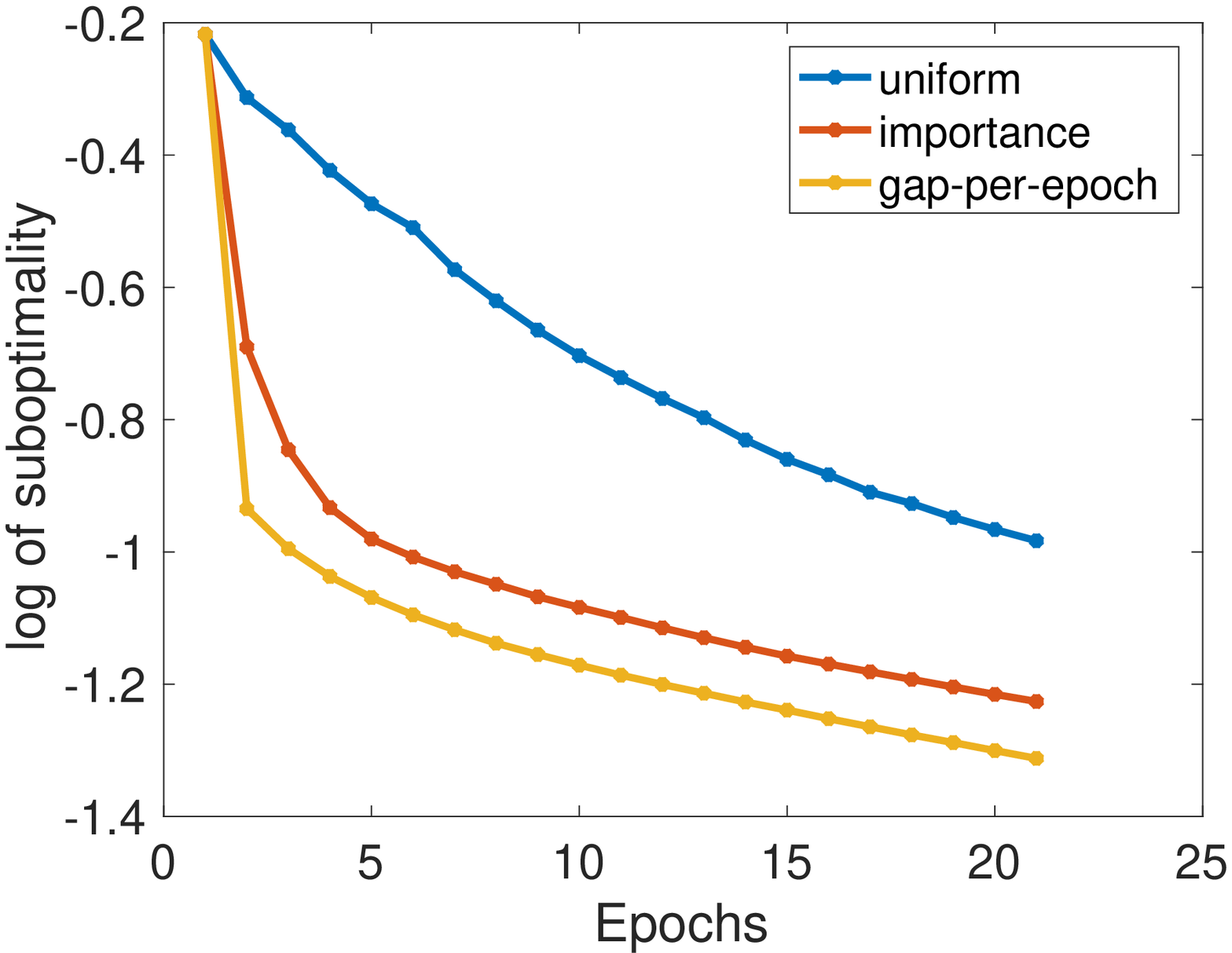}
  \caption{Subopt., fixed distr.}
\end{subfigure}%
\begin{subfigure}{.24\textwidth}
  \centering
\includegraphics[width=0.9\linewidth]{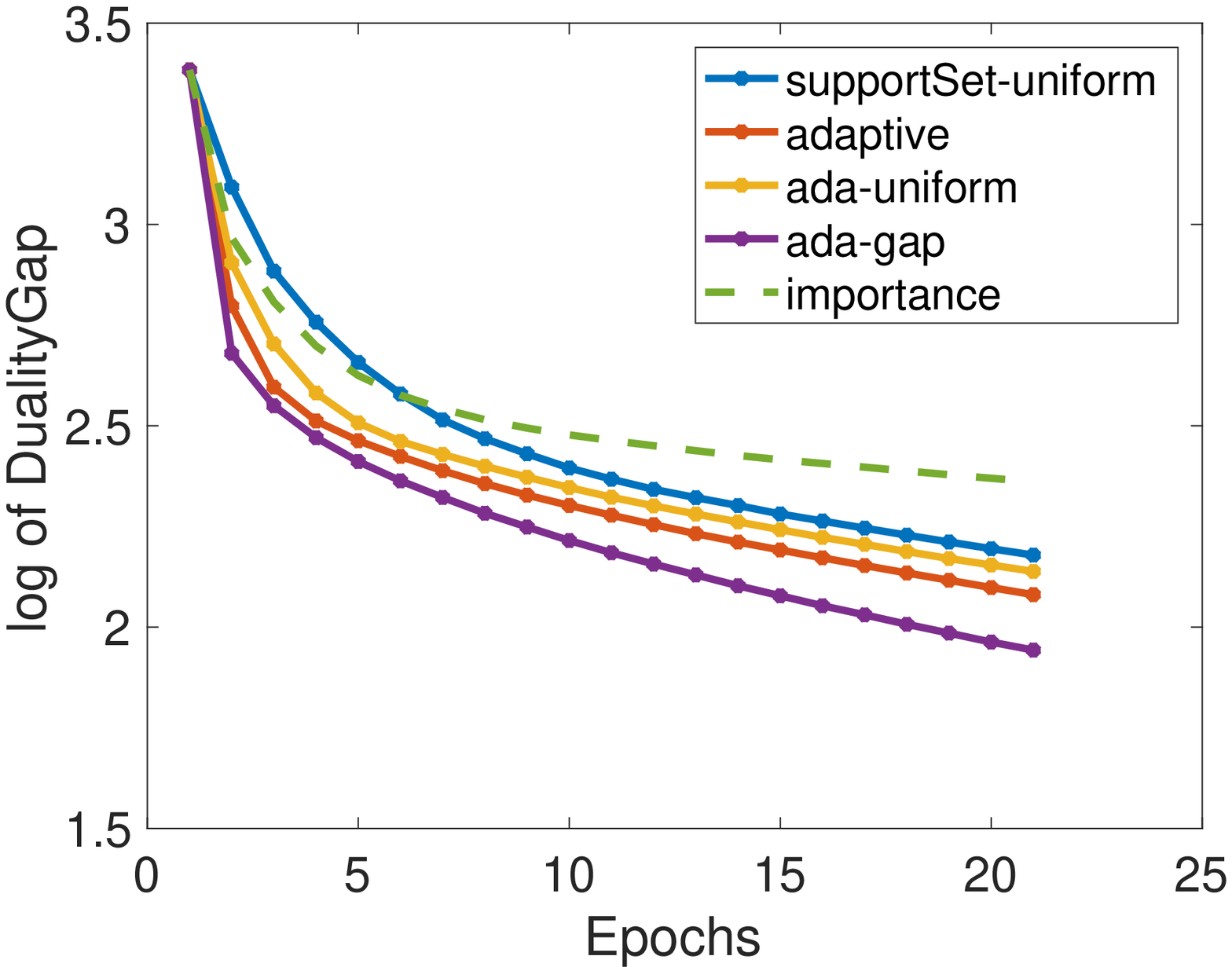}
  \caption{Gap, adaptive}
\end{subfigure}%
\begin{subfigure}{.24\textwidth}
  \centering
\includegraphics[width=0.9\linewidth]{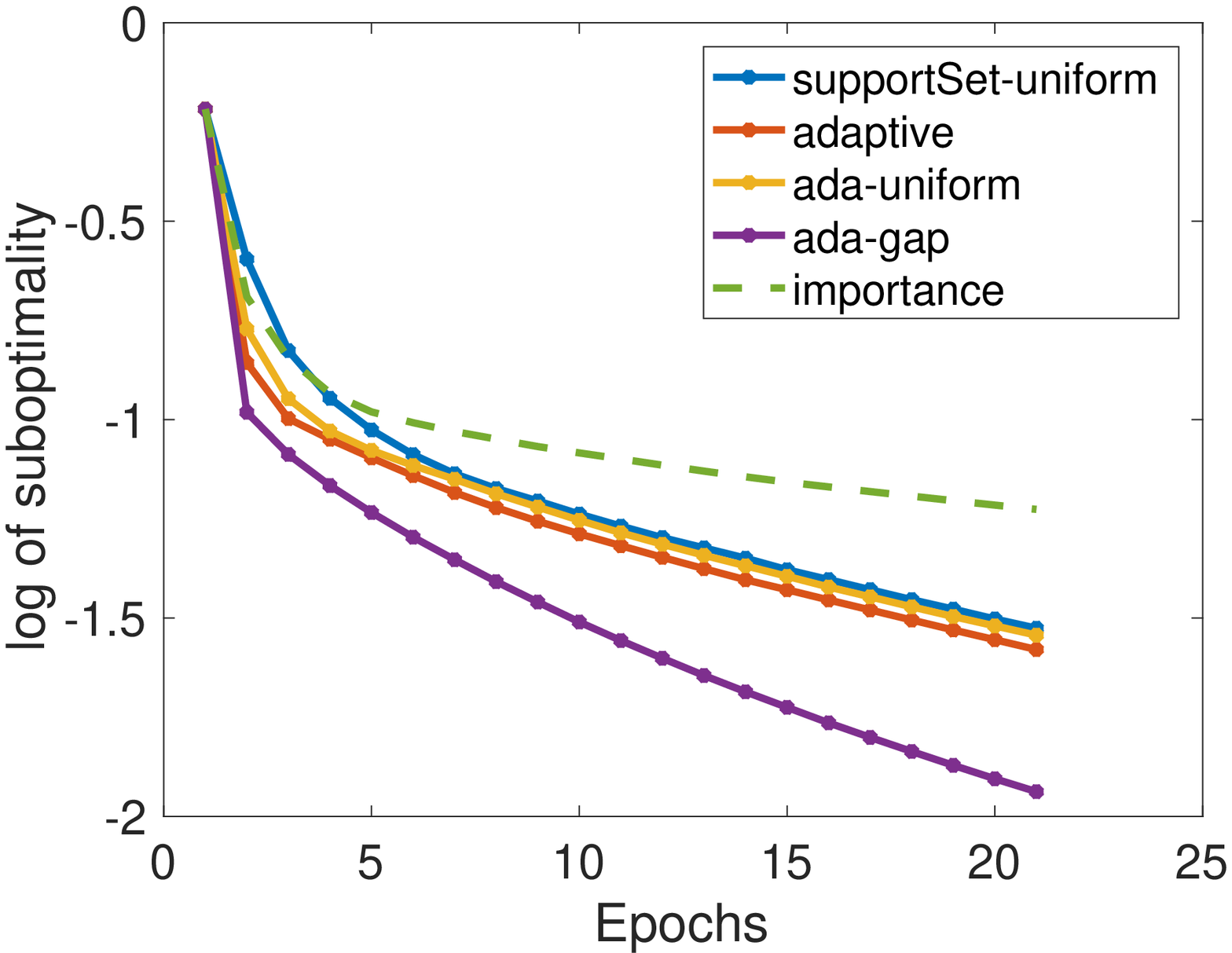}
  \caption{Suboptimality, adaptive}
\end{subfigure}%
\\
\rotatebox[origin=t]{90}{Lasso, \textit{\small mushrooms}}
\begin{subfigure}{.24\textwidth}
  \centering
\includegraphics[width=0.9\linewidth]{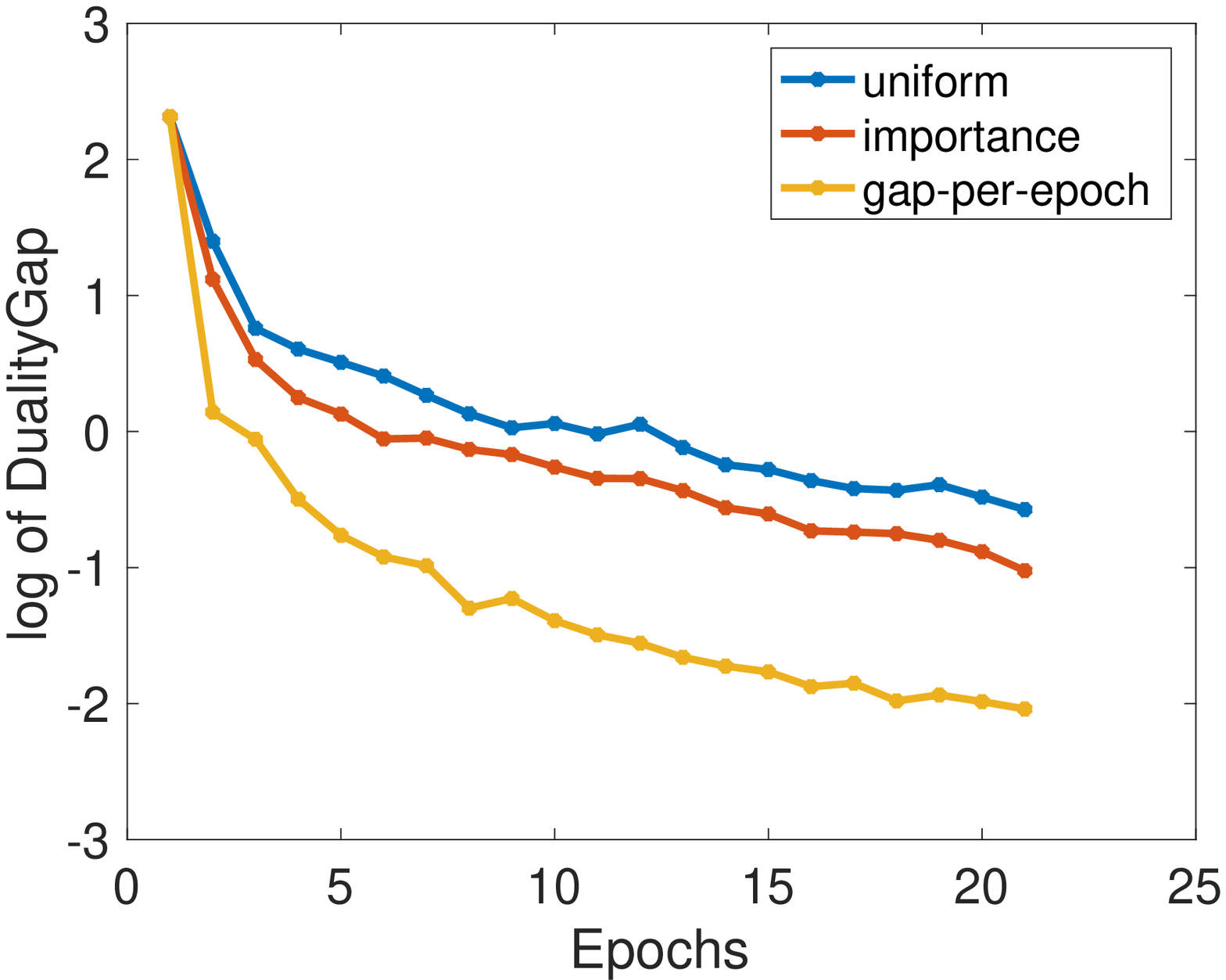}
  \caption{Gap, fixed distr.}
\end{subfigure}%
\begin{subfigure}{.24\textwidth}
  \centering
\includegraphics[width=0.9\linewidth]{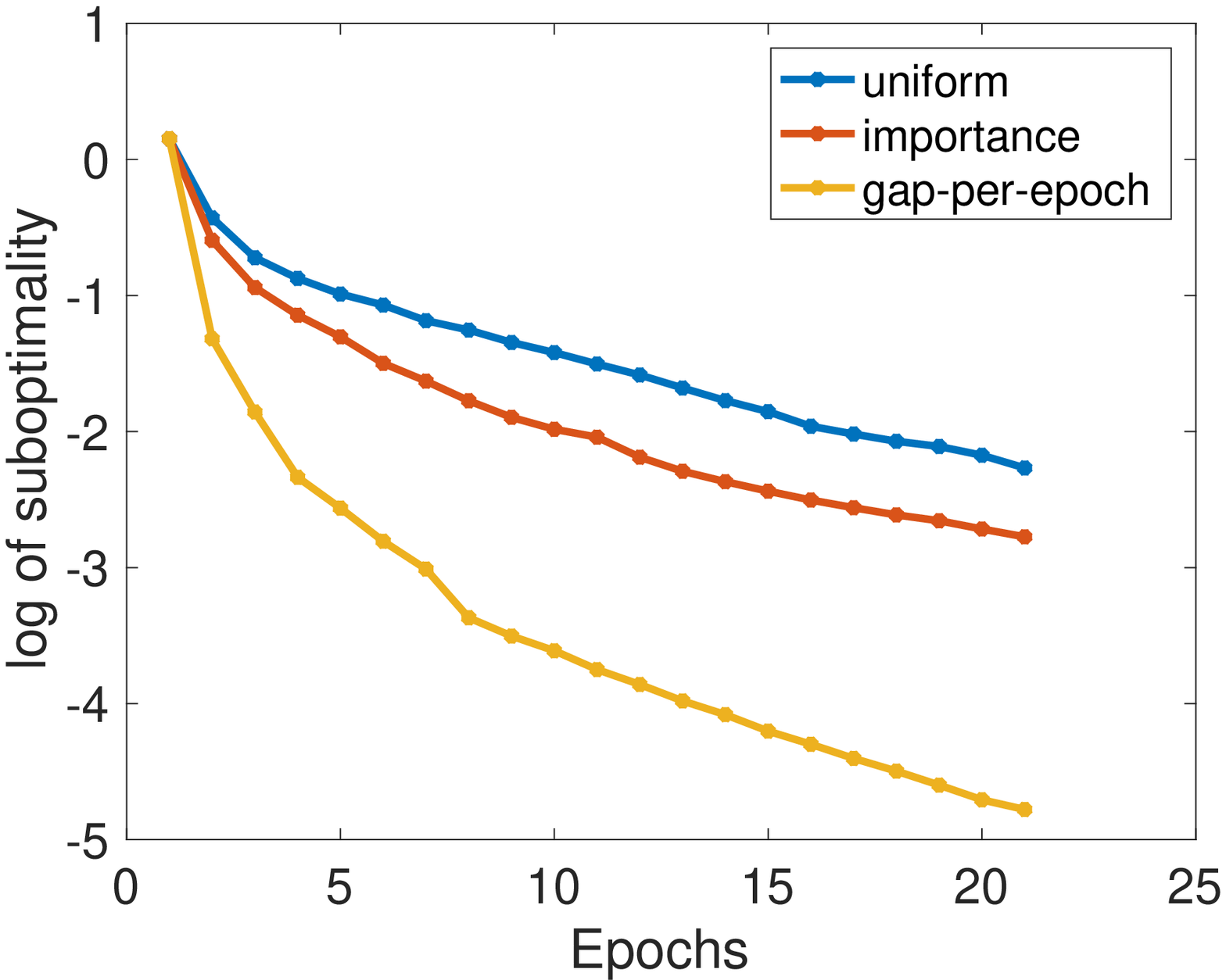}
  \caption{Subopt., fixed distr.}
\end{subfigure}%
\begin{subfigure}{.24\textwidth}
  \centering
\includegraphics[width=0.9\linewidth]{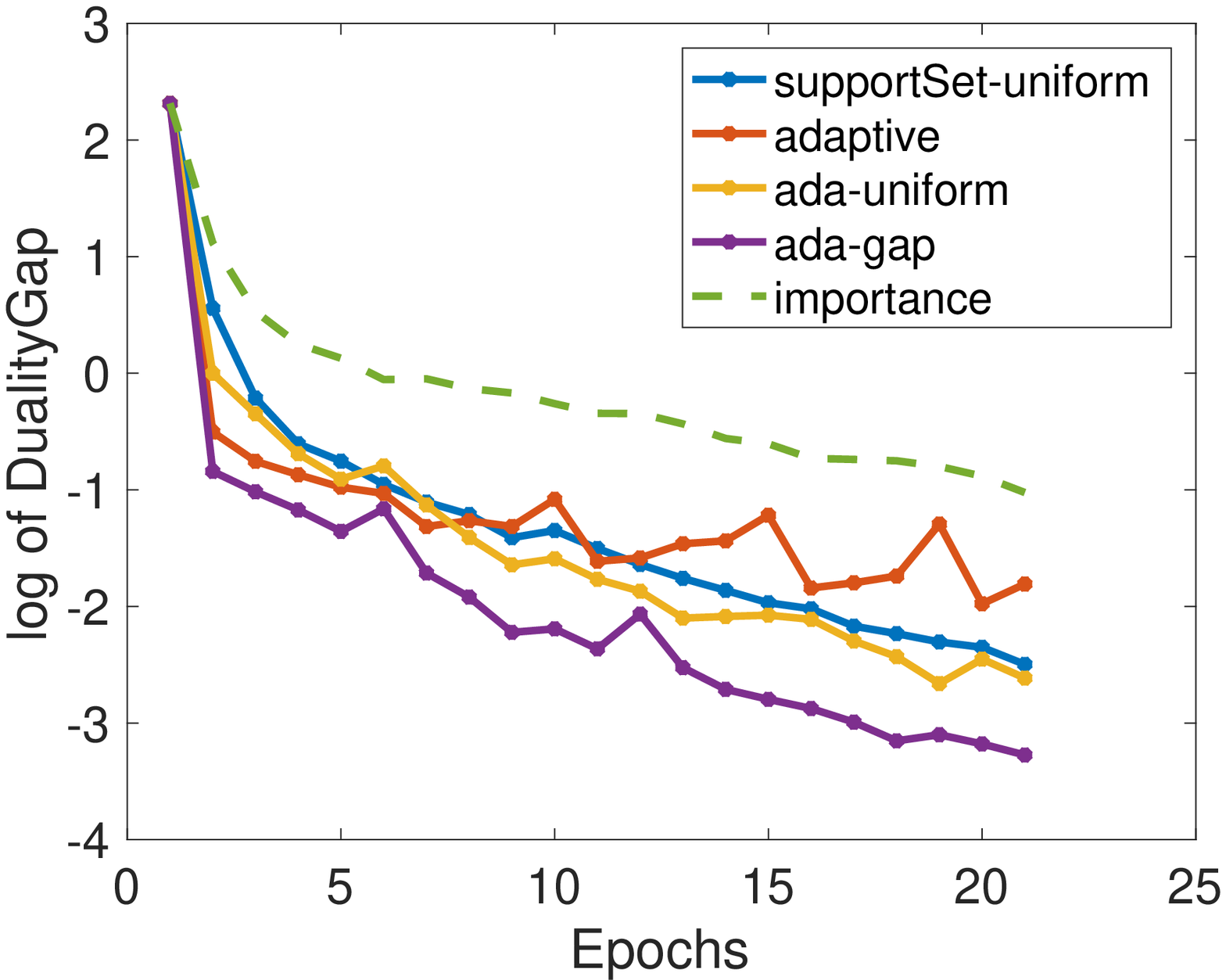}
  \caption{Gap, adaptive}
\end{subfigure}%
\begin{subfigure}{.24\textwidth}
  \centering
\includegraphics[width=0.9\linewidth]{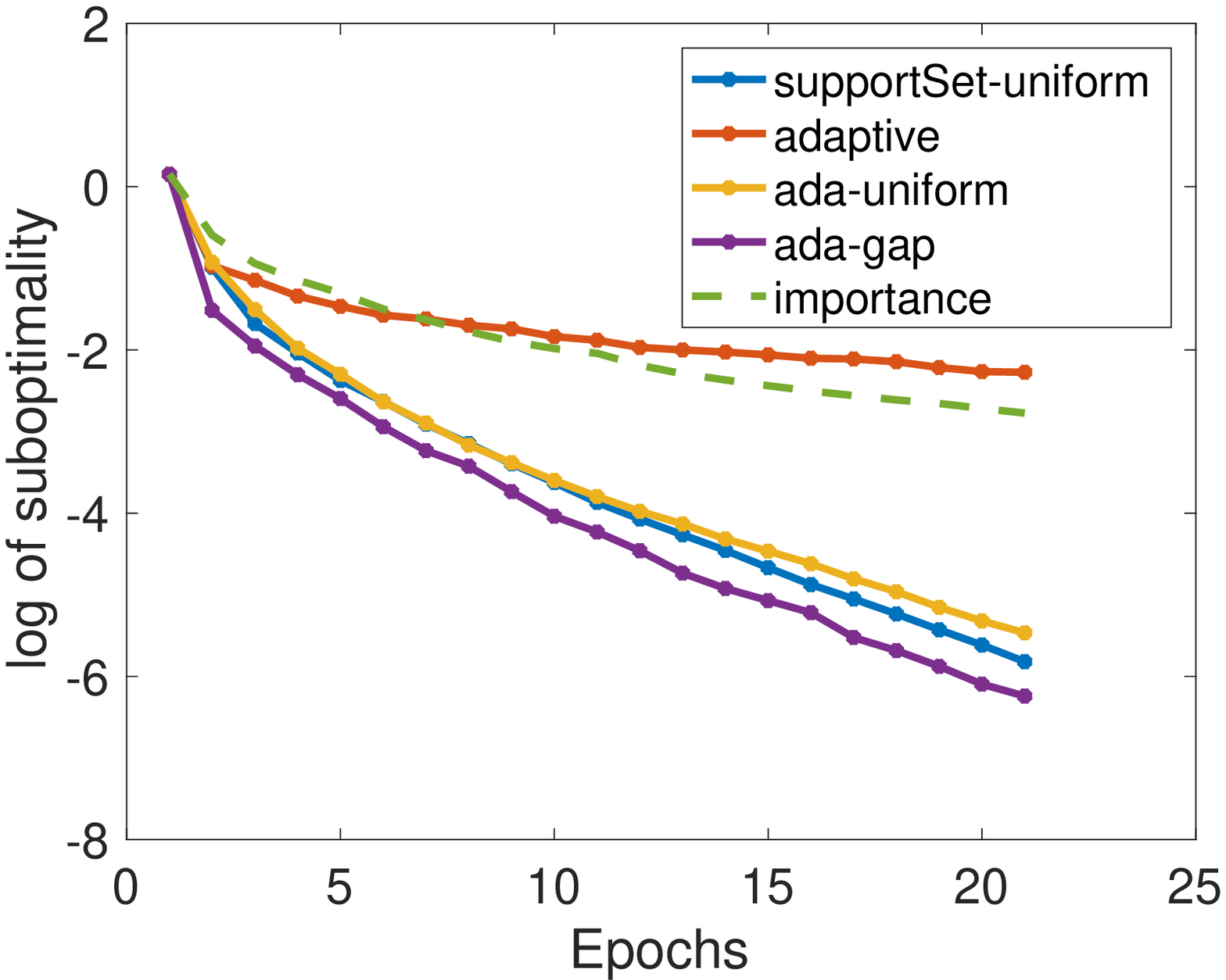}
  \caption{Suboptimality, adaptive}
\end{subfigure}%
\\
\rotatebox[origin=t]{90}{SVM, \textit{\small ionosphere}}
\begin{subfigure}{.24\textwidth}
  \centering
\includegraphics[width=0.9\linewidth]{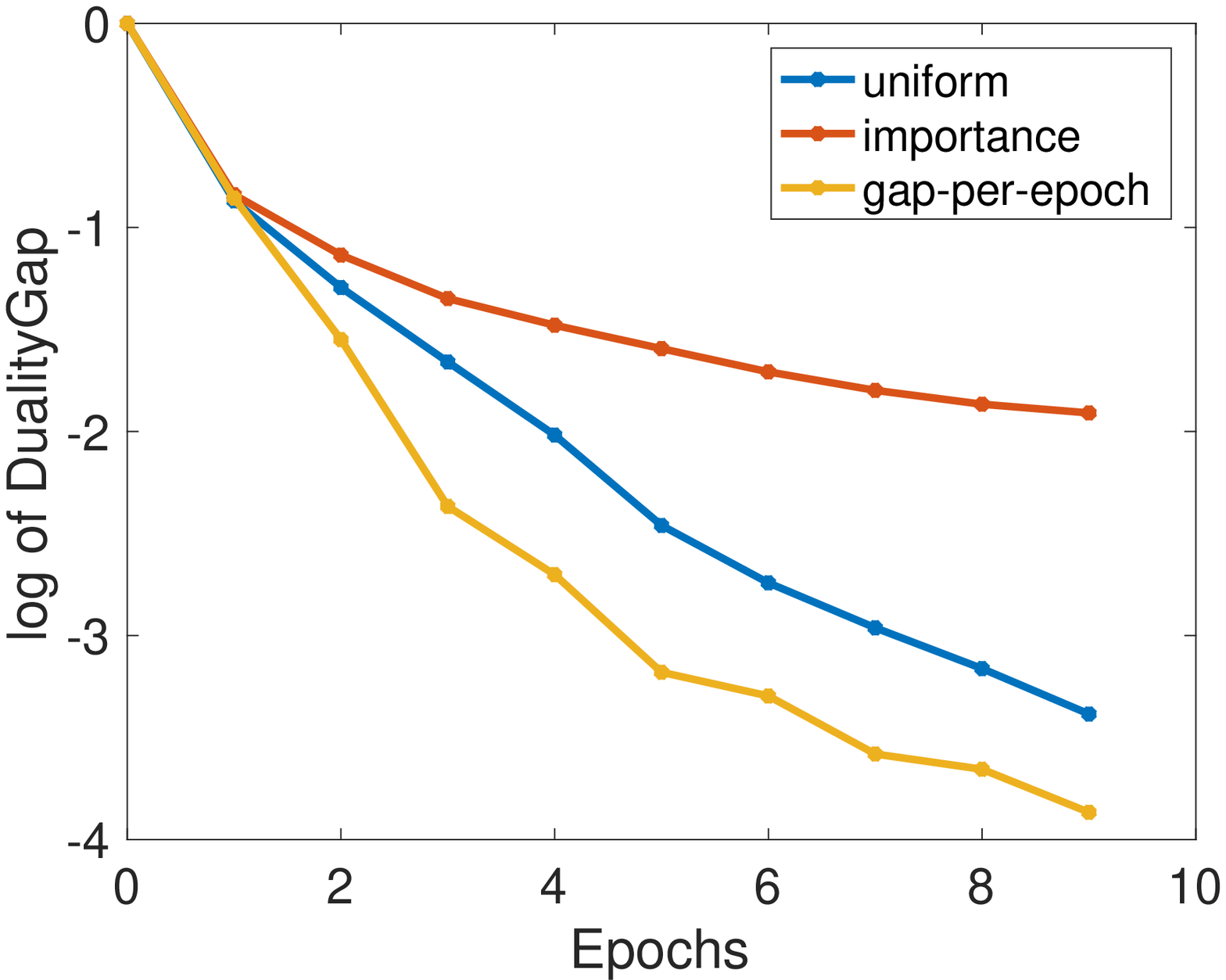}
  \caption{Gap, fixed distr.}
\end{subfigure}%
\begin{subfigure}{.24\textwidth}
  \centering
\includegraphics[width=0.9\linewidth]{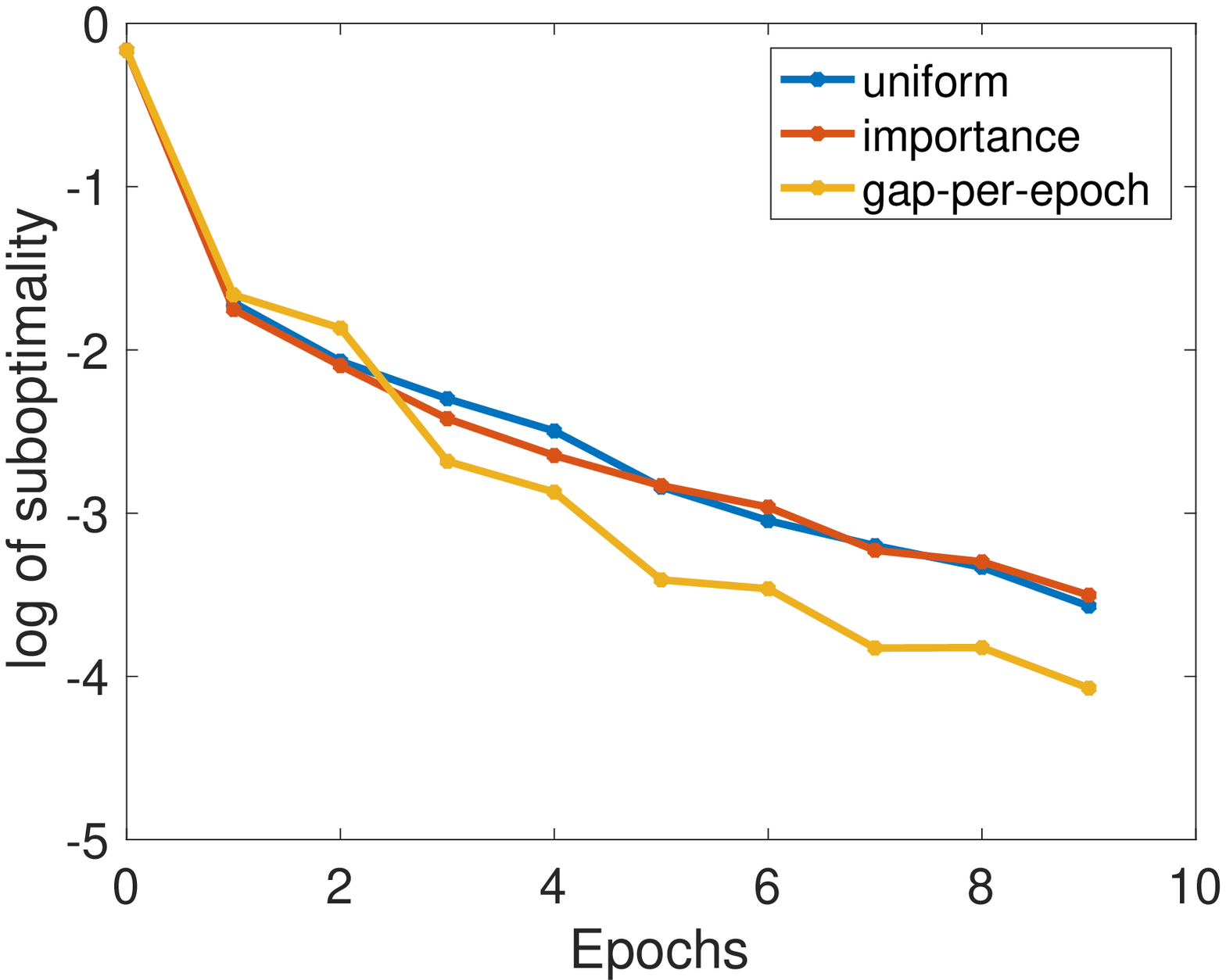}
  \caption{Subopt., fixed distr.}
\end{subfigure}%
\begin{subfigure}{.24\textwidth}
  \centering
\includegraphics[width=0.9\linewidth]{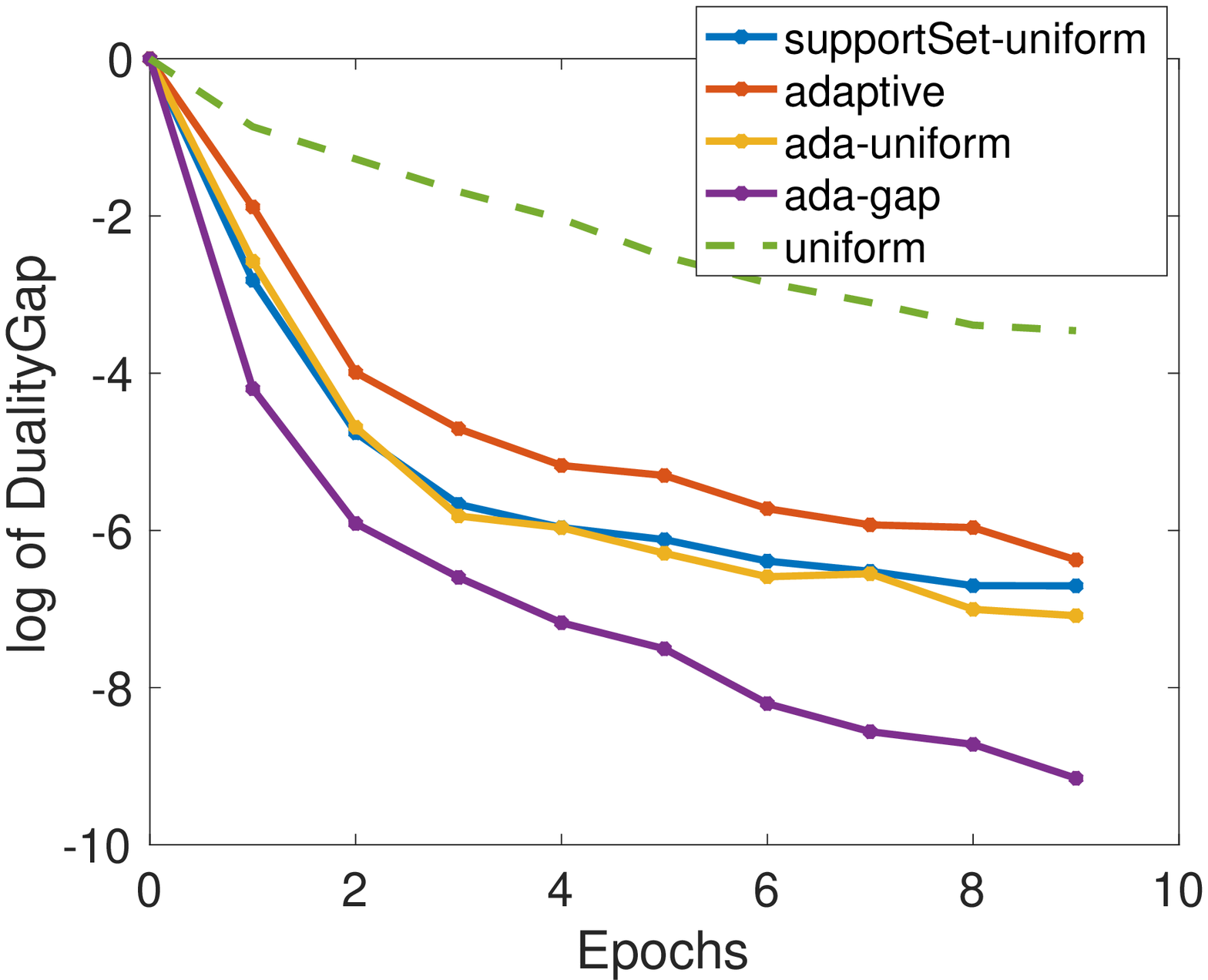}
  \caption{Gap, adaptive}
\end{subfigure}%
\begin{subfigure}{.24\textwidth}
  \centering
\includegraphics[width=0.9\linewidth]{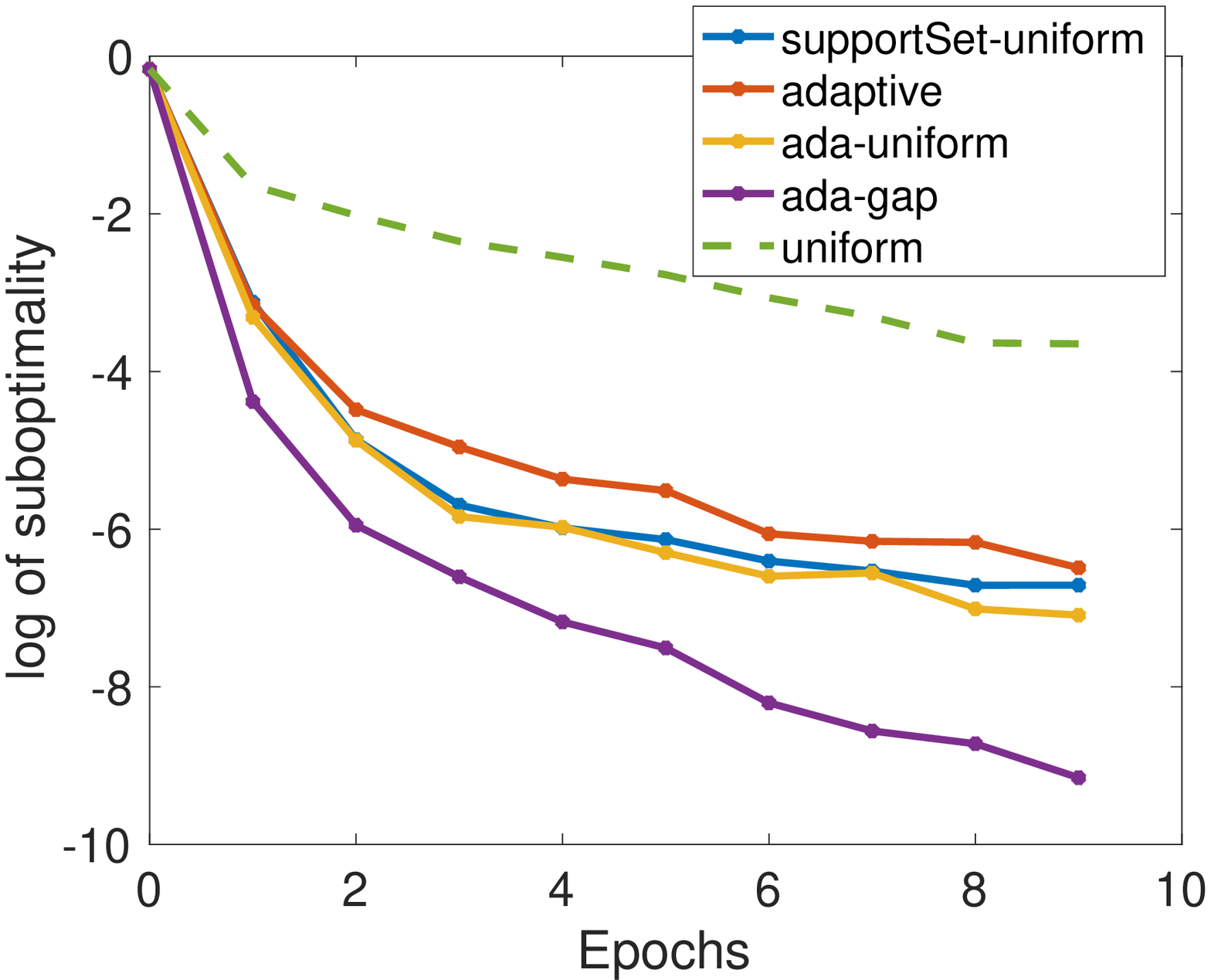}
  \caption{Suboptimality, adaptive}
\end{subfigure}%
\caption{Lasso (first two rows) and SVM (bottom row). Comparison of different fixed and adaptive variants of CD, reporting duality gap and suboptimality measures vs. epochs - rcv1*, mushrooms and ionosphere datasets.}
\label{fig:results}
\end{figure*}

\paragraph{Total cost per epoch}
In a naive implementation, the most expensive sampling schemes are \textit{adaptive, supportSet-uniform, ada-uniform} and \textit{ada-gap}. Those completely recompute the sampling distribution after each iteration, giving a total per-epoch complexity of $\mathcal O(n \cdot \text{nnz})$. %
In contrast, the fixed non-uniform sampling scheme \textit{importance} requires to build the sampling distribution only once, or once per epoch for \textit{gap-per-epoch} (both giving $\mathcal O(\text{nnz})$ operations). The complexity of $n$ samplings using the tree structure is $\mathcal O(n \log(n))$, the complexity of a variable update is $\mathcal O(\text{nnz})$. Overall, the asymptotic complexity therefore is $\mathcal O(n \log(n) + \text{nnz})$ per epoch, compared to $\mathcal O(\text{nnz})$ for simple uniform sampling.

\begin{figure}
\centering
\begin{subfigure}{.25\textwidth}
  \centering
\includegraphics[width=0.9\linewidth]{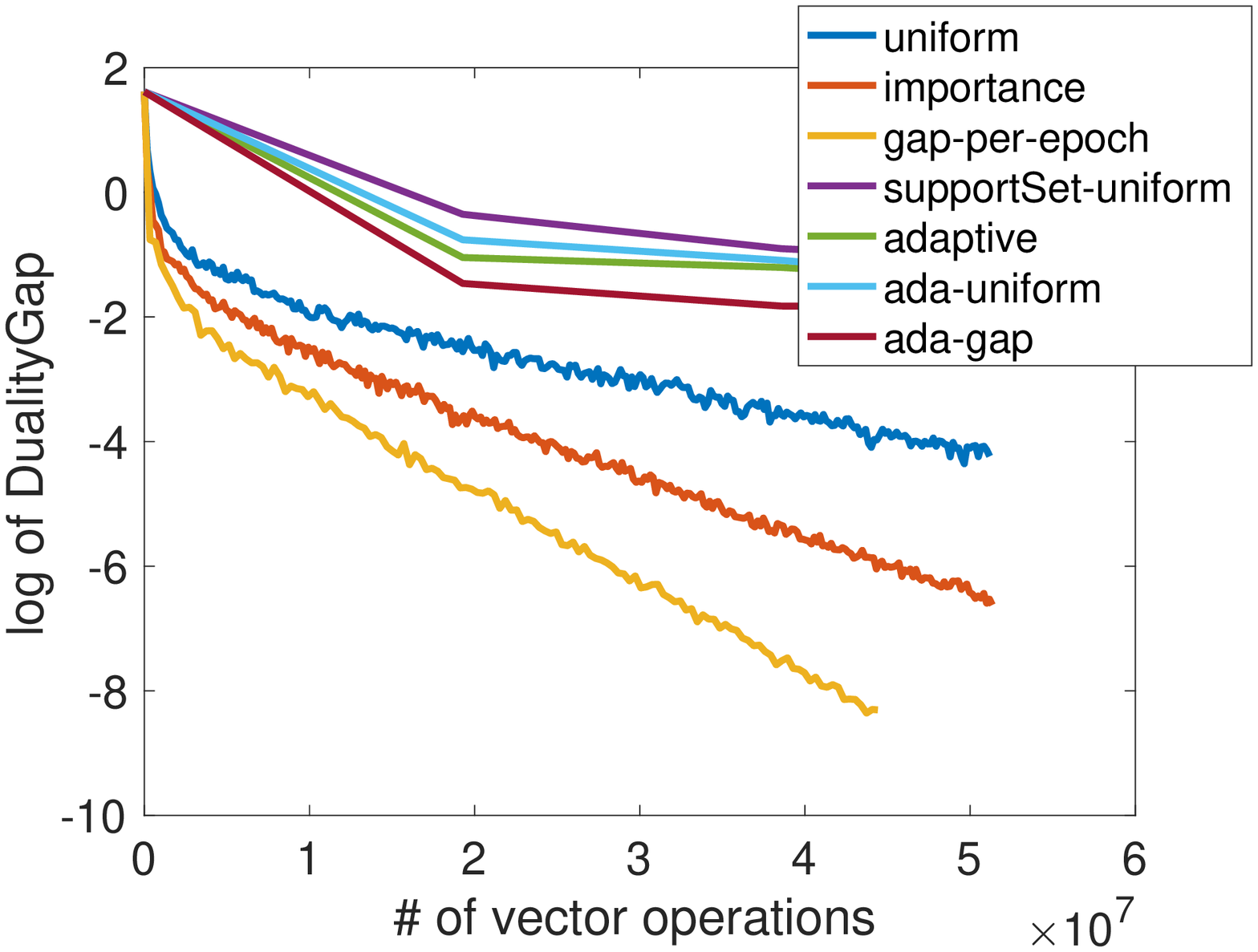}
  \caption{Gap}
\end{subfigure}%
\begin{subfigure}{.25\textwidth}
  \centering
\includegraphics[width=0.9\linewidth]{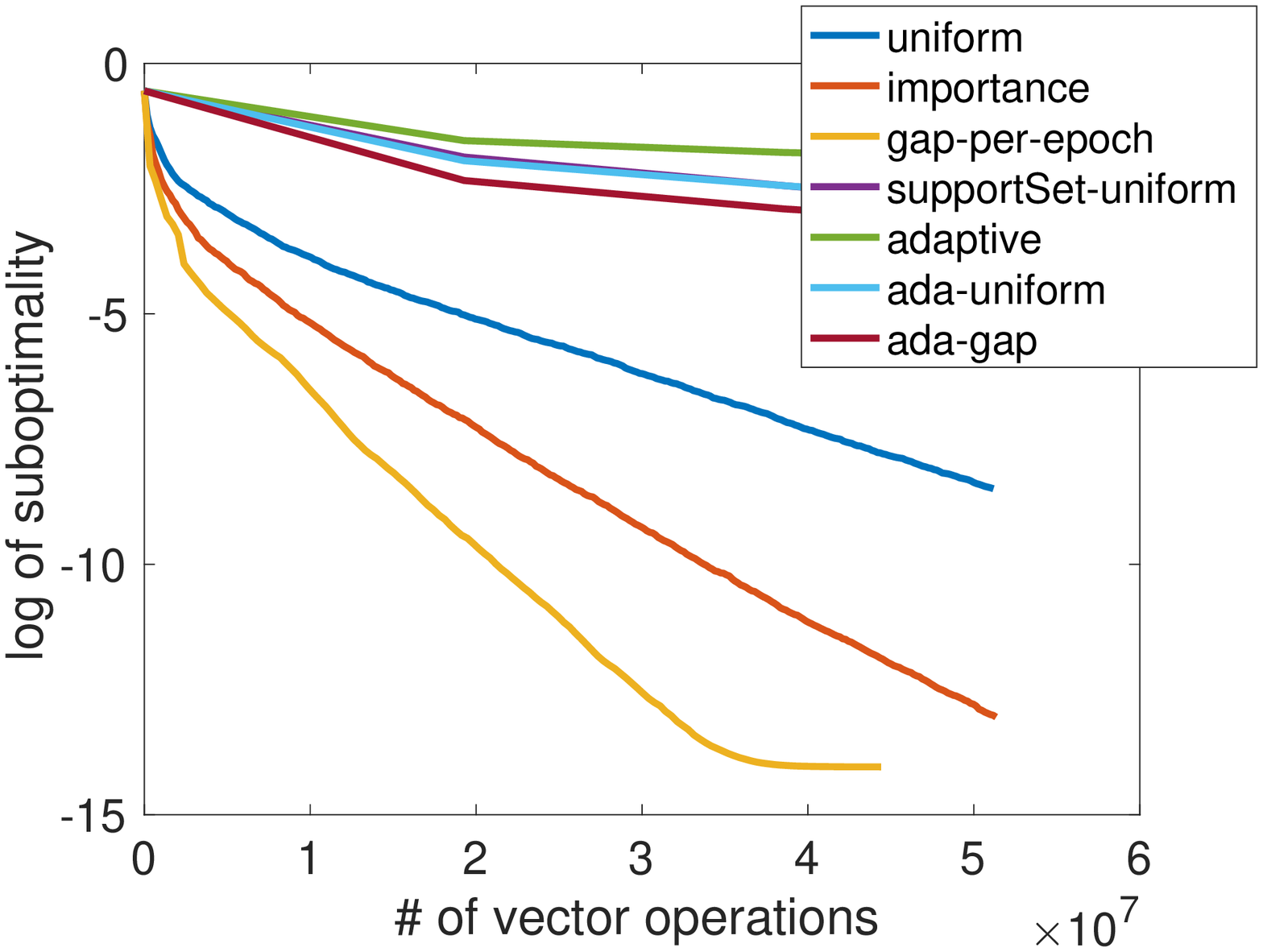}
  \caption{Suboptimality}
\end{subfigure}%
\caption{Lasso on the \textit{mushrooms} dataset. Performance in terms of duality gap and suboptimality, plotted against the total number of vector operations.}
\label{fig:resultsVecOp}
\end{figure}

\vspace{-3mm}
\section{Experimental results}
We provide numerical evidence for our CD sampling strategies on two key machine learning problems: The Lasso and the hinge-loss SVM.
All our algorithms and theory are also directly applicable to sparse logistic regression and others, but we omit experiments due to space limitations.\vspace{-2mm}

\paragraph{Datasets.}
The experiments are performed on three standard datasets listed in Table \ref{data_lasso}, available\footnote{\href{https://www.csie.ntu.edu.tw/~cjlin/libsvmtools/datasets/}{www.csie.ntu.edu.tw/~cjlin/libsvmtools/datasets/}} from the UCI repository \cite{Asuncion+Newman:2007}. %
Note that \textit{rcv1*} is a randomly subsampled\footnote{We randomly picked 10000 datapoints and 1000 features, and then removed zero rows and columns.}
 version the \textit{rcv1} dataset. Experiments on the full \textit{rcv1} dataset are provided in Appendix \ref{app:expBIG}. 
\vspace{-3mm}
\begin{table}[H]
\caption{Datasets}\vspace{-2mm}
\label{data_lasso}
{\small
\begin{tabular}{|l| l| l| l| l|}%
\hline
Dataset & $d$ & $n$ & nnz$/(nd)$ & $c_v = \frac{\mu(\|\av_i\|)}{\sigma(\|\av_i\|)}$ \tabularnewline %
\hline
mushrooms & 112 & 8124 & 18.8\% & 1.34 \tabularnewline %
\hline
rcv1* & 809 & 7438 &0.3\% & 0.62 \tabularnewline %
\hline
ionosphere & 351 & 33 & 88\% & 3.07\tabularnewline
\hline 
\end{tabular}
}%
\end{table}

\paragraph{Setup.}
For Lasso, the regularization parameter $\lambda$ in \eqref{lasso_problem} is set such that the cardinality of the true support set is between 10\%  and 15 \% of the total number of features $n$. We use $\lambda = 0.05$ for \textit{mushrooms}, and $\lambda = 7 \cdot 10^{-4}$  for  \textit{rcv1*}.
For hinge-loss SVM, the regularization parameter $\lambda$ is chosen such that the classification error on the test set was comparable to training error. We use $\lambda = 0.1$ for \textit{ionosphere}. %

\paragraph{Performance.}
Figures \ref{fig:results} and \ref{fig:resultsVecOp} show the performance of all our studied variants of CD.
We record \emph{suboptimality} and \emph{duality gap} (see \eqref{duality_gap}) as the main measures of algorithm performance. %
All reported results are averaged over 5 runs of each algorithm.

\paragraph{Methods with fixed sampling distributions.}
For the three efficient sampling schemes, our results %
show that \textsf{CD importance} converges faster than \textsf{CD uniform} on both datasets for Lasso, however it is worse than the uniform on SVM. The ``mildly'' adaptive strategy \textsf{gap-per-epoch}, based on our coordinate-wise duality gap theory but computed only per-epoch, significantly outperforms both of them. This is observed both in number of epochs (Figure \ref{fig:results}) as well as number of total vector operations (Figure \ref{fig:resultsVecOp}).

\paragraph{Methods with adaptive sampling distributions}
For the adaptive methods updating the probabilities after each coordinate step in Figure \ref{fig:results}, we show importance sampling as a baseline method (dashed line). We see that measured per epoch, all adaptive methods outperform the fixed sampling methods.%
Among all adaptive methods, the \textsf{ada-gap} algorithm shows better convergence speed with both suboptimality and duality gap measures.

\paragraph{Highlights.\vspace{-3mm}}
\begin{itemize}
\item The experiments for Lasso show a clear advantage of non-uniform sampling over the uniform, and superiority of the adaptive sampling over the fixed non-uniform, which is supported by our theory. %
\vspace{-1mm}
\item Among the adaptive methods per iteration, the best performance for both Lasso and SVM in terms of epochs is by \textsf{ada-gap}, which has proven convergence bounds (Theorem~\ref{theorem_gapwise}), but also has high computational cost ($\Theta(d \cdot \text{nnz})$).
\vspace{-1mm}
\item The best sampling scheme in terms of total computational cost is \textsf{gap-per-epoch}, which is the epoch-wise variant of the \textsf{ada-gap} algorithm (based on recomputing duality gaps once per epoch), as shown in Figure \ref{fig:resultsVecOp}. %
\vspace{-1mm}
\end{itemize}

\section{A discussion on the results}
\vspace{-1mm}
Coordinate descent methods have a rich history in the discipline of optimization as well as many machine learning applications, \textit{cf.}, \cite{Wright:2015bn} for a review. %

For SVMs, CD related methods have been studied since their introduction, e.g., by \cite{Friess:1998te}. \citet{Hsieh:2008bd} is the first to propose CD in the partially separable primal-dual setting for hinge-loss. Theoretical convergence rates beyond the application of the hinge-loss can be found in the SDCA line of work \cite{ShalevShwartz:2013wl}, which is the primal-dual analog of the primal-only SGD algorithms. However, the main limitation of SDCA is that it is only applicable to strongly convex regularizers, or requires smoothing techniques \cite{Nesterov:2005ic} to be applicable to general regularizers such as L1. The technique of \citet{Dunner:2016vga} can extend to the CD algorithms as well as the primal-dual analysis to the problem class of interest here, using a bounded set of interest for the iterates instead of relying on smoothing. 

The convergence rate of stochastic methods (such as CD and SGD) naturally depends on a sampling probability distribution over the coordinates or datapoints respectively. While virtually all existing methods use sampling uniformly at random \cite{Hsieh:2008bd,ShalevShwartz:2011vo,LacosteJulien:2013ue,ShalevShwartz:2012tn},  recently \cite{Nesterov:2012fa,Qu:2016bd,Zhao:2014vg,AllenZhu:2016wj} showed that an appropriately defined fixed non-uniform sampling distribution, dubbed as \textit{importance sampling}, can significantly improve the convergence.

The work of \cite{Csiba:2015ue} has taken the non-uniform sampling a step further towards adaptive sampling. %
While restricted to strongly convex regularizers, the rates provided for the AdaSDCA algorithm of \cite{Csiba:2015ue} - when updating all probabilities after each step - can beat the ones for uniform and \textit{importance sampling}. A different approach for adapting the sampling distribution is proposed in \cite{Osokin:2016tp}, where the block coordinate Frank-Wolfe algorithm is enhanced with sampling proportional to values of block-wise duality gaps. An adaptive variant of SGD is studied by \cite{Papa:2015gv}, where they proposed an adaptive sampling scheme dependent on the past iterations in a Markovian manner, without giving explicit convergence rates. Other adaptive heuristics without proven convergence guarantees include ACF \cite{Glasmachers:2014wu} and ACiD~\cite{loshchilov:2011}.
\vspace{-1mm}

For general convex regularizers such as L1, the development of CD algorithms includes
\cite{Fu:1998cd,Friedman:2007ut,Friedman:2010wm} and more recent extensions also improving the theoretical convergence rates \cite{ShalevShwartz:2011vo,Johnson:2015tq,Zhao:2014wy}. All are restricted to uniform sampling, and we are not aware of proven convergence rates showing improvements of non-uniform or even adaptive sampling for unmodified L1 problems. \cite{Zhao:2014vg,AllenZhu:2016wj} show improved rates for non-uniform sampling for L1 but require a smoothing modification of the original problem, and are not covering adaptive sampling.
\vspace{-1mm}

\paragraph{Conclusion.}
In this work, we have investigated \textit{adaptive} rules for adjusting the sampling probabilities in coordinate descent. 
Our theoretical results provide improved convergence rates for a more general class of algorithm schemes on one hand, and optimization problems on the other hand, where we are able to directly analyze CD on general convex objectives (as opposed to strongly convex regularizers in previous works). This is particularly useful for L1 problems and (original) hinge-loss objectives, which were not covered by previous schemes.
Our practical experiments confirm the strong performance of the adaptive algorithms, and confirm that the behavior predicted by our theory. Finally, we advocate the use of the computationally efficient \textsf{gap-per-epoch} sampling scheme in practice. While the scheme is close to the ones supported by our theory, an explicit primal-dual convergence analysis remains a future research question.
\vspace{-1mm}

\paragraph{Acknowledgements.}
We thank Dominik Csiba and Peter Richt{\'a}rik for helpful discussions.
VC was supported in part by the European Commission under Grant ERC Future Proof, SNF 200021-146750, and SNF CRSII2-147633.

{\small
\bibliographystyle{plainnat}
\bibliography{bibliography}
}


\clearpage
\appendix
\section{Proofs}
\label{app:proof}

\begin{replemma}{lemma1}
Let $f$ be $1/\beta$-smooth and each $g_i$ be $\mu_i$-strongly convex with convexity parameter $\mu_i \geq 0$ $\forall i \in [n]$. For the case $\mu_i = 0$, we require $g_i$ to have a bounded support. Then for any iteration $t$, any sampling distribution $\pv^{(t)}$ and any arbitrary $s_i \in [0,1]$ $\forall i \in [n]$, the iterates of the CD method satisfy 
\begin{equation}
\label{eq1Again}
\begin{aligned}
&\E[\OA(\alphav^{(t+1)}) | \alphav^{(t)}] \leq \OA(\alphav^{(t)}) - \sum_{i} s_i p_i^{(t)} G_i(\alphav^{(t)}) \\&-\sum_{i} p_i^{(t)} \Big ( \frac{\mu_i(s_i-s_i^2)}{2} - \frac{s_i^2 \|\av_i\|^2}{2 \beta}  \Big ) |\kappa_i^{(t)}|^2,
\end{aligned}
\end{equation}
here $\kappa_i^{(t)}$ is $i$-th dual residual.
\end{replemma}
\begin{proof}
Since in CD update ($\alphav^{(t+1)} = \alphav^{(t)} + \ev_i \Delta \alpha_i$) only one coordinate per iteration is changed, the one iteration improvement in dual objective can be written as:
$$\begin{aligned}
&\OA(\alphav^{(t)}) - \OA(\alphav^{(t+1)}) \\&= \underbrace{\Big [ g_i(\alpha_i^{(t)}) +  f(A\alphav^{(t)}) \Big ]}_{(A)} - \underbrace{\Big [ g_i(\alpha_i^{(t+1)}) +  f(A\alphav^{(t+1)}) \Big ]}_{(B)}.
\end{aligned}$$
To bound part $(B)$ we use a suboptimal update $\Delta \alpha_i = s_i \kappa_i^{(t)}$, for all $s_i \in [0,1]$:
$$\begin{aligned}
(B) &=   g_i(\alpha_i^{(t+1)}) +  f(A\alphav^{(t+1)}) \\ &\leq \min_{\Delta \alpha_i} \Big [ g_i(\alpha_i^{(t)} + \Delta \alpha_i) +  f(A\alphav^{(t)} + \av_i \Delta \alpha_i) \Big ]\\ &\leq g_i(\alpha_i^{(t)} + s_i \kappa_i^{(t)}) +  f(A\alphav^{(t)} + \av_i s_i \kappa_i^{(t)}). 
\end{aligned}$$
Each of $g_i$ is $\mu_i$-strongly convex, therefore:
$$\begin{aligned}
&g_i(\alpha_i^{(t)} + s_i \kappa_i^{(t)}) = g_i(\alpha_i^{(t)} + s_i (u_i^{(t)} - \alpha_i^{(t)}))\\ &= g_i(s_i(u_i^{(t)}) + (1-s_i)(\alpha_i^{(t)})) 
\\ &\leq s_i g_i(u_i^{(t)}) + (1-s_i)g_i(\alpha_i^{(t)}) - \frac{\mu_i}{2}s_i(1-s_i)(\kappa_i^{(t)})^2.
\end{aligned}$$ 
The function $f$ is $\frac{1}{\beta}$-smooth:
$$\begin{aligned}
&f(A\alphav^{(t)} + \av_i s_i \kappa_i^{(t)}) \\&\leq f(A\alphav^{(t)}) + \nabla f(A\alphav^{(t)})^\top (s_i \kappa_i^{(t)} \av_i) + \frac{1}{2\beta} \| s_i \kappa_i^{(t)} \av_i\|^2.
\end{aligned}$$
As a result:
$$\begin{aligned}
(B) &\leq  s_i g_i(u_i^{(t)}) - s_i g_i(\alpha_i^{(t)}) - \frac{\mu_i}{2}s_i(1-s_i)(\kappa_i^{(t)})^2 \\
 & + \underbrace{g_i(\alpha_i^{(t)}) + f(A\alphav^{(t)})}_{(A)} + \nabla f(A\alphav^{(t)})^\top (s_i \kappa_i^{(t)} \av_i) \\&+ \frac{1}{2\beta} \| s_i \kappa_i^{(t)} \av_i\|^2. 
\end{aligned}$$
With obtained results above and optimality condition $\wv(\alphav) = \nabla f(A\alphav)$, the improvement in dual objective can be written as:
$$ \begin{aligned} &\OA(\alphav^{(t)}) - \OA(\alphav^{(t+1)}) = (A) - (B)  \\
 & \geq - s_i g_i(u_i^{(t)}) + s_i g_i(\alpha_i^{(t)}) + \frac{\mu_i}{2}s_i(1-s_i)(\kappa_i^{(t)})^2 \\
 & - \wv(\alphav^{(t)}) (s_i u_i^{(t)} \av_i)  + \wv(\alphav^{(t)}) (s_i \alpha_i^{(t)} \av_i) - \frac{1}{2\beta} \| s_i \kappa_i^{(t)} \av_i\|^2\\
 & = s_i \Big(-g_i(u_i^{(t)}) + g_i(\alpha_i^{(t)}) - \wv(\alphav^{(t)}) (u_i^{t-1} \av_i)   \\
 & + \wv(\alphav^{(t)}) (\alpha_i^{t-1} \av_i) + \frac{\mu_i}{2}(1-s_i)(\kappa_i^{(t)})^2 - \frac{s_i}{2\beta} \|\av_i \|^2  |\kappa_i^{(t)}|^2 \Big).
\end{aligned}$$
Since $u_i^{(t)} \in \partial g_i^*(-\av_i^\top\wv(\alphav^{(t)}))$, the Fenchel-Young inequality %
becomes equality for $g_i(u_i^{(t)})$:
$$g_i(u_i^{(t)}) + g_i^*(-\av_i^\top\wv(\alphav^{(t)})) = -\wv(\alphav^{(t)}) (u_i^{(t)} \av_i) $$ 
Using this fact, the bound on the improvement in dual objective becomes:
$$\begin{aligned} &\OA(\alphav^{(t)}) - \OA(\alphav^{(t+1)}) \geq s_i \Big( g_i(\alpha_i^{(t)}) + g_i^*(-\av_i^\top\wv(\alphav^{(t)})) \\& + \wv(\alphav^{(t)}) (\alpha_i^{(t)} \av_i) + \frac{\mu_i}{2}(1-s_i)(\kappa_i^{(t)})^2 - \frac{s_i}{2\beta} \|\av_i \|^2  |\kappa_i^{(t)} |^2 \Big)
\end{aligned}$$
Therefore for any $s_i \in [0,1]$ it holds that:
\begin{equation}
\label{oneCoordinate}
\begin{aligned}
&\OA(\alphav^{(t)}) - \OA(\alphav^{(t+1)}) \\&\geq s_i \Big [ G_i(\alphav^{(t)}) + \frac{\mu_i}{2}(1-s_i)|\kappa_i^{(t)}|^2 - \frac{s_i}{2\beta} \|\av_i \|^2  |\kappa_i^{(t)}|^2  \Big ],
\end{aligned}
\end{equation}
where $G_i$ is $i$-th coordinate-wise duality gap:
$$\begin{aligned} 
&G(\alphav^{(t)})  = \sum_i G_i(\alphav^{(t)}) \\&G_i(\alphav^{(t)}) =  g_i^*(-\av_i^\top \wv) + g_i(\alpha_i^{(t)}) + \alpha_i^{(t)}\av_i^\top\wv.
\end{aligned}$$
By taking an expectation of the both sides with respect to $i$, conditioned on $\alphav^{(t)}$, we obtain:
$$\begin{aligned}
&\E[\OA(\alphav^{(t+1)}) | \alphav^{(t)}] \leq \OA(\alphav^{(t)}) - \sum_{i} s_i p_i^{(t)} G_i(\alphav^{(t)}) \\&- \sum_{i} p_i^{(t)} \Big ( \frac{\mu_i(s_i-s_i^2)}{2} - \frac{s_i^2 \|\av_i\|^2}{2 \beta}  \Big ) |\kappa_i^{(t)}|^2
\end{aligned}$$
and thus finalize the proof.
\end{proof}

\newpage
\begin{reptheorem}{thm:main}
Assume $f$ is $\frac{1}{\beta}$-smooth function. Then, if $g^*_i$ is $L_i$-Lipschitz for each $i$ and $\pv^{(t)}$ is coherent with $\kappav^{(t)}$, then the iterates of the CD method satisfies  
\begin{equation}
\label{eq:bound1Again}
\E[\varepsilon_A^{(t)}] \leq \frac{2F^{\circ}n^2 + \frac{2\varepsilon_A^{(0)}}{p_{\min}}}{\frac{2}{p_{\min}} + t}.
\end{equation}
Moreover, the overall number of iterations $T$ to obtain a duality gap $G(\bar{\alphav}) \leq \varepsilon$ must satisfy the following: 
\begin{equation}
\label{eq:theorem1Again}
T \geq \max\Bigg \{0, \frac{1}{p_{\min}} \log \Big (\frac{2 \varepsilon_A^{(0)}}{n^2 p_{\min} F^{\circ}} \Big) \Bigg\} + \frac{5F^{\circ}n^2}{\varepsilon} - \frac{1}{p_{\min}}.
\end{equation}
Moreover, when $t \geq T_0$ with 
\begin{equation}
\label{eq:theorem2Again}
T_0 = \max\Bigg \{0, \frac{1}{p_{\min}} \log \Big (\frac{2 \varepsilon_A^{(0)}}{n^2 p_{\min} F^{\circ}} \Big) \Bigg\}  +  \frac{4 F^{\circ} n^2}{\varepsilon} - \frac{2}{p_{\min}}
\end{equation}
we have the suboptimality bound of  $\E[\OA(\alphav^{(t)}) - \OA(\alphav^{\star})]  \leq \varepsilon/2$, where $\varepsilon^{(0)}_A$ is the initial dual suboptimality and $F^{\circ}$ is an upper bound on $\E[F^{(t)}]$ taken over the random choice of the sampled coordinate at $1,\dots,T_0$ algorithm iterations.
\end{reptheorem}
\begin{proof}
According to Lemma \ref{duality_lip-bound} if $g^*_i$ is $L_i$-Lipschitz, then $g_i$ has $L_i$-bounded support and the conditions of Remark \ref{remark1} are satisfied. From Remark \ref{remark1} we know:
\begin{equation} \label{lemma}
\E[\OA(\alphav^{(t+1)})|\alphav^{(t)}] \leq \OA(\alphav^{(t)}) - \theta G(\alphav^{(t)}) + \frac{\theta^2 n^2}{2} F^{(t)}
\end{equation}
With $\OA(\alphav^{(t)}) - \OA(\alphav^{(t+1)}) = \varepsilon_A^{(t)} - \varepsilon_A^{(t+1)}$ and $\varepsilon_A^{(t)} = \OA(\alphav^{(t)}) - \OA(\alphav^{*}) \leq  G(\alphav^{(t)}),$ this implies:
$$\E[\varepsilon_A^{(t+1)}|\alphav^{(t)}] \geq  \varepsilon_A^{(t)} - \theta \varepsilon_A^{(t)} + \frac{\theta^2 n^2}{2} F^{(t)}$$
by taking unconditional expectation over all iterations and using definition of $F^{\circ}$ we obtain:
$$\begin{aligned}
\E[\varepsilon_A^{(t+1)}]  &\leq (1 - \theta) \E[\varepsilon_A^{(t)}] + \frac{\theta^2 n^2}{2} \E[F^{(t)}] \\&\leq (1 - \theta) \E[\varepsilon_A^{(t)}] + \frac{\theta^2 n^2}{2} F^{\circ}
\end{aligned}
$$
Now we will show using induction that we can bound the dual suboptimality as:
\begin{equation} \label{suboptimality_bound}
\E[\varepsilon_A^{(t)}] \leq \frac{2F^{\circ}n^2}{\frac{2}{p_{\min}} + t - t_0}, 
\end{equation}
where $t \geq t_0 =  \max\Big \{0, \frac{1}{p_{\min}} \log \big (\frac{2 \varepsilon_A^0}{n^2 p_{\min} F^{\circ}} \big) \Big\}$. Indeed, let's choose $\theta = p_{\min}$, then the basis of induction at $t=t_0$ is verified as:
$$\begin{aligned} \E[\varepsilon_A^{(t)}]  &\leq (1-p_{\min})^t \varepsilon_A^{(0)} + \sum_{i=0}^{t-1} (1-p_{\min})^i p_{\min}^2 n^2 \frac{F^{\circ}}{2} \\ &\leq e^{-t p_{\min}} \varepsilon_A^{(0)} + n^2 p_{\min} \frac{F^{\circ}}{2} \\ &\leq n^2 p_{\min} F^{\circ}. \end{aligned}$$
Note that if in \eqref{suboptimality_bound} instead of $F^{\circ}$ we take $F'^{\circ}:= F^{\circ} + \frac{\varepsilon_A^{(0)}}{n^2p_{\min}},$ the condition holds with $t_0=0$:
$$\E[\varepsilon_A^{(t)}] \leq \frac{2F'^{\circ}n^2}{\frac{2}{p_{\min}} + t} =  \frac{2F^{\circ}n^2 + \frac{2\varepsilon_A^{(0)}}{p_{\min}}}{\frac{2}{p_{\min}} + t}.$$
Now let's prove the inductive step, for $t > t_0$. Suppose claim holds for $t-1$, then 
$$\begin{aligned} \E[\varepsilon_A^{(t)}]  &\leq (1-\theta) \E[\varepsilon_A^{(t-1)}] +  \theta^2 n^2 \frac{F^{\circ}}{2} \\ &\leq (1-\theta)  \frac{2F^{\circ}n^2}{\frac{2}{p_{\min}} + (t - 1) - t_0}  + \theta^2  n^2 \frac{F^{\circ}}{2} ,\end{aligned}$$
choosing $\theta = \frac{2}{\frac{2}{p_{\min}}+t -1 - t_0} \leq p_{\min} $ yeilds:
$$\begin{aligned} \E[\varepsilon_A^{(t)}]  &\leq \Bigg (1- \frac{2}{\frac{2}{p_{\min}}+t -1 - t_0} \Bigg ) \frac{2F^{\circ}n^2}{\frac{2}{p_{\min}} + (t - 1) - t_0} \\&+ \Bigg (\frac{2}{\frac{2}{p_{\min}}+t -1 - t_0} \Bigg )^2 \frac{F^{\circ} n^2}{2} \\
&=\Bigg (1- \frac{2}{\frac{2}{p_{\min}}+t -1 - t_0} \Bigg ) \frac{2F^{\circ}n^2}{\frac{2}{p_{\min}} + (t - 1) - t_0}  \\&+ \Bigg (\frac{1}{\frac{2}{p_{\min}}+t -1 - t_0}  \Bigg )  \frac{2F^{\circ}n^2}{\frac{2}{p_{\min}} + (t - 1) - t_0}  \\
&= \Bigg (1- \frac{1}{\frac{2}{p_{\min}}+t -1 - t_0} \Bigg ) \frac{2F^{\circ}n^2}{\frac{2}{p_{\min}} + (t - 1) - t_0} \\
&= \frac{2F^{\circ}n^2}{\frac{2}{p_{\min}} + (t - 1) - t_0} \Bigg (\frac{\frac{2}{p_{\min}}+t -2 - t_0}{\frac{2}{p_{\min}}+t -1 - t_0} \Bigg ) \\
&\leq \frac{2F^{\circ}n^2}{\frac{2}{p_{\min}} + t - t_0}.
\end{aligned}$$
This proves the bound \eqref{suboptimality_bound} on suboptimality. To bound the duality gap we sum the inequality \eqref{lemma} over the interval $t=T_0+1, ..., T$ and obtain 
$$\begin{aligned}
&\E[\OA(\alphav^{(T_0)}) - \OA(\alphav^{(T)})] \\&\geq \theta \E \Big [\sum_{t=T_0+1}^{T} \OA(\alphav^{(t-1)}) + \OB(\wv^{(t-1)}) \Big ] \\&- (T-T_0) \frac{\theta^2 n^2}{2} F^{\circ}, 
\end{aligned}$$
by rearranging terms and choosing $\bar{\wv}$ and $\bar{\alphav}$ to be the average vectors over $t \in \{ T_0,T-1\}$ we get:
$$\begin{aligned}
&\E[G(\bar{\alphav})]= \E[\OA(\bar{\alphav}) + \OB(\bar{\wv})] \\&\leq \frac{\E[\OA(\alphav^{(T_0)}) - \OA(\alphav^{(T)})]}{\theta (T-T_0)} + \theta n^2 \frac{F^{\circ}}{2}.
\end{aligned}$$
If $T \geq \frac{1}{p_{\min}} + T_0$ and $T_0 \geq t_0$, we can set $\theta = 1/(T-T_0)$ and combining this with \eqref{suboptimality_bound} we get:
$$\begin{aligned} \E[G(\bar{\alphav})] &\leq \E[\OA(\alphav^{(T_0)}) - \OA(\alphav^{(T)})] +  \frac{F^{\circ} n^2}{2 (T-T_0)}\\
&\leq \E[\OA(\alphav^{(T_0)}) - \OA(\alphav^{*})] +  \frac{F^{\circ} n^2}{2 (T-T_0)}\\
&\leq \frac{2F_Tn^2}{\frac{2}{p_{\min}} + t - t_0} +  \frac{F^{\circ} n^2}{2 (T-T_0)}.
\end{aligned}$$
A sufficient condition to bound the duality gap by $\varepsilon$ is that $T_0 \geq t_0 - \frac{2}{p_{\min}} + \frac{4F^{\circ}n^2}{\varepsilon}$ and $T \geq T_0 + \frac{F^{\circ} n^2}{\varepsilon}$ which also implies $\E[\OA(\alphav^{(T_0)}) - \OA(\alphav^{*})] \leq \varepsilon/2$. Since we also need $T_0\geq t_0$ and $T-T_0 \geq \frac{1}{p_{\min}}$, the overall number of iterations should satisfy:
$$T_0 \geq \max \Big\{t_0, \frac{4F_Tn^2}{\varepsilon} - \frac{2}{p_{\min}} + t_0 \Big\}$$ and 
$$T- T_0 \geq \max \Big\{\frac{1}{p_{\min}}, \frac{F^{\circ}n^2}{\varepsilon}\Big \}.$$
Using $a+b \geq \max(a,b)$ we finally can bound the total number of required iterations to reach a duality gap of $\varepsilon$ by:
$$ \begin{aligned} T &\geq T_0 + \frac{1}{p_{\min}} + \frac{F^{\circ}n^2}{\varepsilon} \\
&\geq t_0 +  \frac{4F^{\circ}n^2}{\varepsilon} - \frac{1}{p_{\min}} + \frac{F^{\circ}n^2}{\varepsilon} \\
&=  t_0 +  \frac{5F^{\circ}n^2}{\varepsilon} - \frac{1}{p_{\min}}\end{aligned}.$$
This concludes the proof.
\end{proof}

\newpage
\begin{reptheorem}{theorem_gapwise}
Let $f$ be a $\frac{1}{\beta}$-smooth function. Then, if $g^*_i$ is $L_i$-Lipschitz for each $i$ and $p_i^{(t)} := \frac{G_i(\alphav^{(t)})}{G(\alphav^{(t)})}$, then the iterations of the CD method satisfies
\begin{equation}
\E[\varepsilon_A^{(t)}] \leq \frac{2F^{\circ}_g n^2+2n\varepsilon_A^{(0)}}{t+2n},
\end{equation}
where $F^{\circ}_g$ is an upper bound on $\E\Big[ F^{(t)}_g\Big],$ where the expectation is taken over the random choice of the sampled coordinate at iterations $1,\dots, t$ of the algorithm. Here $\overrightarrow{\Gv}$ and $\overrightarrow{\Fv}$ are defined as:
$$\overrightarrow{\Gv}:= (G_i(\alphav^{(t)}))_{i=1}^n, \hspace{0.5cm} \overrightarrow{\Fv}:= ( \|\av_i \|^2 |\kappa_i^{(t)}|^2)_{i=1}^n,$$
and $F^{(t)}_g$ is defined analogously to \eqref{F_genconv}:
\begin{equation}
\label{F_gapAgain}
 F^{(t)}_g:=\frac{\chi(\overrightarrow{\Fv})}{n \beta (\chi(\overrightarrow{\Gv}))^3} \sum_i \|\av_i\|^2 |\kappa_i^{(t)}|^2. 
\end{equation}
\end{reptheorem}
\begin{proof}
We start from the result \eqref{eq1Again} of Lemma \ref{lemma1} when $\mu_i=0$:
$$ \begin{aligned}
\E[\OA(\alphav^{(t+1)}) | \alphav^{(t)}] &\leq \OA(\alphav^{(t)}) - \sum_{i} s_i p_i^{(t)} G_i(\alphav^{(t)}) \\&+ \sum_{i} p_i^{(t)} \frac{s_i^2 \|\av_i\|^2}{2 \beta} |\kappa_i^{(t)}|^2, 
\end{aligned}$$
by regrouping the elements and subtracting the optimal function value $\OA(\alphav^\star)$ from both sides we obtain:
$$ \begin{aligned}
&\E[\OA(\alphav^{(t+1)}) - \OA(\alphav^\star)| \alphav^{(t)}] \leq \OA(\alphav^{(t)}) - \OA(\alphav^\star) \\&- \sum_{i} s_i p_i^{(t)} G_i(\alphav^{(t)}) + \sum_{i}  \frac{p_i^t s_i^2 }{2 \beta} \|\av_i\|^2 |\kappa_i^{(t)}|^2.
\end{aligned}$$
With $\varepsilon_A^{(t)} := \OA(\alphav^{t}) - \OA(\alphav^{*})$:
$$ \begin{aligned}
\E[\varepsilon_A^{(t+1)}| \alphav^{(t)}] &\leq \varepsilon_A^{(t)} - \sum_i s_i p_i^{(t)} G_i(\alphav^{(t)}) \\&+ \sum_i \frac{p_i^{(t)}s_i^2}{2 \beta} \|\av_i \|^2 |\kappa_i^{(t)}|^2.
\end{aligned}$$
We take $p_i^{(t)} := \frac{G_i(\alphav^{(t)})}{G(\alphav^{(t)})}$ and $s_i := s$, then:
$$\begin{aligned}
\E[\varepsilon_A^{(t+1)}| \alphav^{(t)}] &\leq \varepsilon_A^{(t)} - \frac{s}{G(\alphav^{(t)})} \sum_i (G_i(\alphav^{(t)}))^2 \\&+ \frac{s^2}{2 \beta G(\alphav^{(t)})}\sum_i  G_i(\alphav^{(t)}) \|\av_i \|^2 |\kappa_i^{(t)}|^2.
\end{aligned}$$
To simplify the following derivation we define a duality gap vector $\overrightarrow{\Gv}:= (G_i(\alphav^{(t)}))_{i=1}^n$ and residual vector $\overrightarrow{\Fv}:= ( \|\av_i \|^2 |\kappa_i^{(t)}|^2)_{i=1}^n$, the inequality becomes:
$$\E[\varepsilon_A^{(t+1)}| \alphav^{(t)}] \leq \varepsilon_A^{(t)} - \frac{s}{G(\alphav^{(t)})} \| \overrightarrow{\Gv}\|_2^2 + \frac{s^2}{2 \beta G(\alphav^{(t)})} \langle \overrightarrow{\Gv},\overrightarrow{\Fv} \rangle.$$
By bounding the last term using the Cauchy-Schwarz inequality $\langle \overrightarrow{\Gv},\overrightarrow{\Fv} \rangle \leq \|\overrightarrow{\Gv}\|_2 \|\overrightarrow{\Fv}\|_2$ and using Lemma \ref{lemmaNormRelation} we obtain:
$$
\begin{aligned} 
&\E[\varepsilon_A^{(t+1)}| \alphav^{(t)}] \leq \varepsilon_A^{(t)} - \frac{s}{G(\alphav^{(t)})} \| \overrightarrow{\Gv}\|_2^2 \\&+ \frac{s^2}{2 \beta G(\alphav^{(t)})} \|\overrightarrow{\Gv}\|_2 \|\overrightarrow{\Fv}\|_2 \\
&= \varepsilon_A^{(t)} - \frac{sG(\alphav^{(t)})(\chi(\overrightarrow{\Gv}))^2}{n}  \\&+ \frac{s^2 \chi(\overrightarrow{\Gv}) \chi(\overrightarrow{\Fv}) \sum_i  \|\av_i \|^2 |\kappa_i^{(t)}|^2}{2 n \beta} \\
&\leq \varepsilon_A^{(t)} - \varepsilon_A^{(t)} \frac{(\chi(\overrightarrow{\Gv}))^2s}{n}  + \frac{s^2 \chi(\overrightarrow{\Gv}) \chi(\overrightarrow{\Fv}) \sum_i  \|\av_i \|^2 |\kappa_i^{(t)}|^2}{2 n \beta} \\
&= \Big( 1 - \frac{(\chi(\overrightarrow{\Gv}))^2s}{n} \Big)\varepsilon_A^{(t)}   + \frac{s^2 \chi(\overrightarrow{\Gv}) \chi(\overrightarrow{\Fv}) \sum_i  \|\av_i \|^2 |\kappa_i^{(t)}|^2}{2 n \beta}. 
\end{aligned}
$$
In the third line we have used weak duality, that is $G(\alphav^{(t)}) \geq \varepsilon_A^{(t)}$.
Analogously to the proof of Theorem \ref{thm:main} we now prove that the suboptimality is bounded by:
\begin{equation}
\label{gap_subopt}
\E[\varepsilon_A^{(t)}] \leq \frac{2F^{\circ}_g n^2+2n\varepsilon_A^{(0)}}{t+2n},
\end{equation}
where
$$ F^{\circ}_g \geq\E\bigg[\frac{\chi(\overrightarrow{\Fv})}{n \beta (\chi(\overrightarrow{\Gv}))^3} \sum_i \|\av_i\|^2 |\kappa_i^{(t)}|^2\bigg].$$
The basis of induction at $t=0$ obviously follows from the non-negativity of $F^{\circ}_g$.\\
Now let us prove the induction step, assume that condition \eqref{gap_subopt} holds at step $t$, then by taking $s:=\frac{2n}{(t+2n)(\chi(\overrightarrow{\Gv}))^2}$ we get:
\begin{equation}
\label{p1}
\begin{aligned}
&\E[\varepsilon_A^{(t+1)} | \alphav^{(t)}] \leq \Big( 1 - \frac{(\chi(\overrightarrow{\Gv}))^2s}{n} \Big) \varepsilon_A^{(t)} \\&+ \frac{s^2 \chi(\overrightarrow{\Gv}) \chi(\overrightarrow{\Fv}) \sum_i  \|\av_i \|^2 |\kappa_i^{(t)}|^2}{2 n \beta}\\
&\leq \Big( 1 - \frac{2}{(t+2n)} \Big) \frac{2F^{\circ}_g n^2+2n\varepsilon_A^{(0)}}{t+2n}  \\& + \frac{2n \chi(\overrightarrow{\Fv}) \sum_i  \|\av_i \|^2 |\kappa_i^{(t)}|^2}{\beta (t+2n)^2  (\chi(\overrightarrow{\Gv}))^3}.
\end{aligned}
\end{equation}
By taking an unconditional expectation of \eqref{p1} and bounding by $\hat{C}:=F^{\circ}_gn+\varepsilon_A^{(0)}$ we obtain:
$$
\begin{aligned}
\E[\varepsilon_A^{(t+1)}] &\leq \Big( 1 - \frac{2}{(t+2n)} \Big) \frac{2n\hat{C}}{t+2n}  \\& + \frac{2n}{ (t+2n)^2} \E\bigg[\frac{ \chi(\overrightarrow{\Fv}) \sum_i  \|\av_i \|^2 |\kappa_i^{(t)}|^2}{\beta  (\chi(\overrightarrow{\Gv}))^3} \bigg]\\
&\leq \Big( 1 - \frac{2}{(t+2n)} \Big) \frac{2n\hat{C}}{t+2n}   + \frac{2n\hat{C}}{ (t+2n)^2}\\
&= \frac{2n\hat{C}}{t+2n} \Big( 1 - \frac{2}{(t+2n)}  + \frac{1}{(t+2n)}\Big)\\
&= \frac{2n\hat{C}}{t+2n} \frac{t+2n-1}{t+2n} \\
&\leq \frac{2n\hat{C}}{t+2n} \frac{t+2n}{t+2n+1} \\
&= \frac{2n\hat{C}}{t+2n+1}.
\end{aligned}
$$
And this concludes the proof.
\end{proof}

\newpage
\section{Algorithms}
\label{app:algorithms}
\subsection{Algorithms with fixed sampling}
In this subsection we give the coordinate descent algorithms with sampling schemes with fixed probabilities which we derived the in previous chapter. The basic Coordinate Descent with uniform sampling for Lasso was presented in \cite[Algorithm~1]{ShalevShwartz:2011vo}. Coordinate descent for hinge-loss SVM was given in \cite{ShalevShwartz:2013wl}. Here we give these algorithms along with their enhanced fixed non-uniform sampling versions of \textit{importance sampling} and heuristic \textit{gap-per-epoch} sampling, which is based on initial coordinate-wise duality gaps at the beginning of each epoch.\\
\paragraph{Lasso}
See Algorithms \ref{lassoCD}, \ref{lassoCD_gap-per-epoch}. To describe the algorithm the "soft-threshold" function $s_\tau(w)$ is defined:
$$s_\tau(w) := sign(w)(|w|-\tau)_+ = sign(w) \max \Big \{ |w|-\tau,0 \Big\}$$
\begin{algorithm}[H]
    \caption{Coordinate Descent for Lasso (uniform \& importance)}
    \label{lassoCD}
    \begin{algorithmic}[1]
    \State Choose $\text{mode} \in [\text{uniform}, \text{importance}]$
       \State let $\alphav^{(0)}=0,$ $\wv^{(0)} = \nabla f (A \alphav^{(0)})$
   \Switch{$\text{mode}$}
    \Case{$\text{uniform}$}
       \State $p_i := \frac{1}{n} ~~~\forall i$ 
    \EndCase
    \Case{$\text{importance}$}
      \State $p_i :=  \frac{L_i \|\av_i\|}{\sum_j L_j \|\av_j\|} ~~~\forall i$ 
    \EndCase
  \EndSwitch
		\For{ t = 0,1,...}
		\State sample $j$ from $[n]$ according to distribution $\pv$
		\State let $z_j = \frac {\partial f(A \alphav^{(t)})}{\partial \alpha_j}$ %
		\State  $\alpha_j^{(t+1)} = s_{\lambda}(\alpha_j^{(t)} - z_j)$
		\State $\wv^{(t+1)} = \nabla f (A \alphav^{(t+1)})$
		\EndFor
    \end{algorithmic}
\end{algorithm}

\begin{algorithm}[H]
    \caption{Coordinate Descent for Lasso (gap-per-epoch)}
    \label{lassoCD_gap-per-epoch}
    \begin{algorithmic}[1]
       \State let $\alphav^{(0)}=0,$ $\wv^{(0)} = \nabla f (A \alphav^{(0)})$
		\For{ t = 0,1,...}
		 \If{$\text{mod}(t,n) == 0$} 
    	\State generate probabilities distribution $\pv^{(t)}$:
    	$$p_i = \frac{B\big [|\av_i^\top\wv^{(0)}| - \lambda \big]_+  + \lambda|\alpha_i^{(0)}| + \alpha_i^{(0)} \av_i^\top\wv^{(0)}}{\sum_j \Big ( B\big [|\av_j^\top\wv^{(0)}| - \lambda \big]_+  +\lambda |\alpha_j^{(0)}| + \alpha_j^{(0)} \av_j^\top\wv^{(0)} \Big )}$$ 
    	\EndIf
		\State sample $j$ from $[n]$ according to distribution $\pv$
		\State let $z_j = \frac {\partial f(A \alphav^{(t)})}{\partial \alpha_j}$ 
		\State  $\alpha_j^{(t+1)} = s_{\lambda}(\alpha_j^{(t)} - z_j)$
		\State $\wv^{(t+1)} = \nabla f (A \alphav^{(t+1)})$
		\EndFor
    \end{algorithmic}
\end{algorithm}

\paragraph{Hinge-Loss SVM}
See Algorithms \ref{svmSDCD}, \ref{svmSDCD_gap-per-epoch}
\begin{algorithm}[H] %
    \caption{Stochastic Dual Coordinate Descent (uniform \& importance)}
    \label{svmSDCD}
    \begin{algorithmic}[1]
    	\State Choose $\text{mode} \in [\text{uniform}, \text{importance}]$
       \State let $\alphav^{(0)}=0,$ $\wv^{(0)} = 0$
       \Switch{$\text{mode}$}
    \Case{$\text{uniform}$}
       \State $p_i := \frac{1}{n} ~~~\forall i$ 
    \EndCase
    \Case{$\text{importance}$}
      \State $p_i :=  \frac{L_i \|\av_i\|}{\sum_j L_j \|\av_j\|} ~~~\forall i$ 
    \EndCase
  \EndSwitch
		\For{ t = 0,1,...}
		\State sample $j$ from $[n]$ according to distribution $\pv$
		\State $\Delta \alpha_j = y_j \max\bigg(0, \min \Big(1, \frac{1 - y_j \av_j^\top \wv^{(t)} }{\|\av_j\|^2 / (\lambda n) } + y_j \alpha_j^{(t)} \Big) \bigg) - \alpha_j^{(t)}$
		\State $\alphav^{(t+1)} = \alphav^{(t)} + \Delta \alpha_j \ev_j$    
		\State $\wv^{(t+1)} =\wv^{(t)} + (\lambda n)^{-1} \Delta \alpha_j \av_j$
		\EndFor
    \end{algorithmic}
\end{algorithm}

\begin{algorithm}[H] %
    \caption{Stochastic Dual Coordinate Descent (gap-per-epoch)}
    \label{svmSDCD_gap-per-epoch}
    \begin{algorithmic}[1]
       \State let $\alphav^{(0)}=0,$ $\wv^{(0)} = 0$
		\For{ t = 0,1,...}
		\If{$\text{mod}(t,n) == 0$} 
    	\State generate probabilities distribution $\pv^{(t)}$:
    	$$p_i = \frac{\phi_i(\av_i^\top\wv) -  \alpha_iy_i + \alpha_i\av_i^\top \wv }{\sum_{j=1}^n \Big ( \phi_j(\av_j^\top\wv) -  \alpha_jy_j + \alpha_j\av_j^\top \wv \Big )}$$ 
    	\EndIf
		\State sample $j$ from $[n]$ according to distribution $\pv$
		\State $\Delta \alpha_j = y_j \max\bigg(0, \min \Big(1, \frac{1 - y_j \av_j^\top \wv^{(t)} }{\|\av_j\|^2 / (\lambda n) } + y_j \alpha_j^{(t)} \Big) \bigg) - \alpha_j^{(t)}$
		\State $\alphav^{(t+1)} = \alphav^{(t)} + \Delta \alpha_j \ev_j$    
		\State $\wv^{(t+1)} =\wv^{(t)} + (\lambda n)^{-1} \Delta \alpha_j \av_j$
		\EndFor
    \end{algorithmic}
\end{algorithm}

\newpage
\subsection{Algorithms with adaptive sampling}
In this subsection we consider Coordinate Descent with adaptive sampling schemes. Here we present 4 different schemes:
\begin{itemize}
\item \textbf{supportSet-uniform} discussed in Section \ref{mix_theory}, defined in \eqref{ssunifdist}.
\item \textbf{adaptive} discussed in Section \ref{mix_theory}, defined in \eqref{adadist}.
\item \textbf{ada-uniform} discussed in Section \ref{mix_theory}, defined in \eqref{mixdist}.
\item \textbf{ada-gap} discussed in Section \ref{gapwise_theory}, defined in Theorem \ref{theorem_gapwise}.
\end{itemize}
The algorithms with aforementioned sampling schemes are given below. For Lasso see Algorithms \ref{lasso_ssuniform}, \ref{lasso_adaptive}, \ref{lasso_ada-unif} and \ref{lasso_gapwise}. For hinge-loss SVM see Algorithms \ref{svm_ssuniform}, \ref{svm_adaptive}, \ref{svm_ada-unif} and \ref{svm_gapwise}.

\begin{algorithm}[H]
\caption{Coordinate Descent (supportSet-uniform)}
 \label{lasso_ssuniform}
\begin{algorithmic}[1]
\State let $\alphav^{(0)}=0,$ $\wv^{(0)} = \nabla f (A \alphav^{(0)})$
\For{ t = 0,1,...}
\State calculate absolute values of dual residuals $|\kappa_j^{(t)}|$ for all $j \in [n]$ %
$$|\kappa_j^{(t)}| := \Big |\alpha_j - B \cdot \text{sign}(\av_i^\top \wv^{(t)}) \cdot \big [|\av_j^\top\wv^{(t)}| - \lambda \big]_+ \Big| $$
\State find $t$-support set $I_t = \{ i \in [n] : \kappa_i^{(t)} \neq 0 \}  \subseteq [n]$
\State generate adaptive probabilities distribution $\pv^{(t)}$, for each $i\in[n]$:
 $$p_i^{(t)} := \begin{cases}
    \frac{1}{|I_t|},& \text{if } \kappa_i^{(t)} \neq 0 \\
    0,& \text{otherwise}
\end{cases}$$
\State sample $j$ from $[n]$ according to $\pv^{(t)}$
\State let $z_j = \frac {\partial f(A \alphav^{(t)})}{\partial \alpha_j}$ %
\State  $\alpha_j^{(t+1)} = s_{\lambda}(\alpha_j^{(t)} - z_j)$
\State $\wv^{(t+1)} := \nabla f (A \alphav^{(t+1)})$
\EndFor
\end{algorithmic}
\end{algorithm}

\begin{algorithm}[H]
\caption{Coordinate Descent (adaptive)}
 \label{lasso_adaptive}
\begin{algorithmic}[1]
\State let $\alphav^{(0)}=0,$ $\wv^{(0)} = \nabla f (A \alphav^{(0)})$
\For{ t = 0,1,...}
\State calculate absolute values of dual residuals $|\kappa_j^{(t)}|$ for all $j \in [n]$
$$|\kappa_j^{(t)}| = \Big |\alpha_j - B \cdot \text{sign}(\av_i^\top \wv^{(t)}) \cdot \big [|\av_j^\top\wv^{(t)}| - \lambda \big]_+ \Big| $$
\State generate adaptive probabilities distribution $\pv^{(t)}$:
 $$p_i^{(t)} = \frac{|\kappa_i^{(t)}| \|\av_i\|}{\sum_j |\kappa_j^{(t)}| \|\av_j\|}$$
\State sample $j$ from $[n]$ according to $\pv^{(t)}$
\State let $z_j = \frac {\partial f(A \alphav^{(t)})}{\partial \alpha_j}$
\State  $\alpha_j^{(t+1)} = s_{\lambda}(\alpha_j^{(t)} - z_j)$
\State $\wv^{(t+1)} = \nabla f (A \alphav^{(t+1)})$
\EndFor
\end{algorithmic}
\end{algorithm}

\begin{algorithm}[H]
\caption{Coordinate Descent (ada-uniform)}
 \label{lasso_ada-unif} %
\begin{algorithmic}[1]
\State let $\alphav^{(0)}=0,$ $\wv^{(0)} = \nabla f (A \alphav^{(0)})$
\For{ t = 0,1,...}
\State calculate absolute values of dual residuals $|\kappa_j^{(t)}|$ for all $j \in [n]$
$$|\kappa_j^{(t)}| = \Big |\alpha_j - B \cdot \text{sign}(\av_i^\top \wv^{(t)}) \cdot \big [|\av_j^\top\wv^{(t)}| - \lambda \big]_+ \Big| $$
\State find $t$-support set $I_t = \{ i \in [n] : \kappa_i^{(t)} \neq 0 \}  \subseteq [n]$
\State generate adaptive probabilities distribution $\pv^{(t)}$:
 $$\begin{cases}
    p_i^{(t)} =  \frac{1}{2|I_t|} + \frac{|\kappa_i^{(t)}| \|\av_i\|}{2\sum_j |\kappa_j^{(t)}| \|\av_j\|},& \text{if } \kappa_i^{(t)} \neq 0 \\
    p_i^{(t)} = 0,& \text{otherwise}
\end{cases}$$
\State sample $j$ from $[n]$ according to $\pv^{(t)}$
\State let $z_j = \frac {\partial f(A \alphav^{(t)})}{\partial \alpha_j}$
\State  $\alpha_j^{(t+1)} = s_{\lambda}(\alpha_j^{(t)} - z_j)$
\State $\wv^{(t+1)} = \nabla f (A \alphav^{(t+1)})$
\EndFor
\end{algorithmic}
\end{algorithm}

\begin{algorithm}[H]
\caption{Coordinate Descent (ada-gap)}
 \label{lasso_gapwise}
\begin{algorithmic}[1]
\State let $\alphav^{(0)}=0,$ $\wv^{(0)} = \nabla f (A \alphav^{(0)})$
\For{ t = 0,1,...}
\State calculate feature-wise duality gaps $G_j^{(t)}$ for all $j \in [n]$
$$G_j^{(t)} = B\big [|\av_i^\top\wv^{(t)}| - \lambda \big]_+  + \lambda |\alpha_i^{(t)}| + \alpha_i^{(t)} \av_i^\top\wv^{(t)} $$
\State generate adaptive probabilities distribution $\pv^{(t)}$:
 $$p_i^{(t)} = \frac{G_i^{(t)}}{\sum_j G_j^{(t)}}$$
\State sample $j$ from $[n]$ according to $\pv^{(t)}$
\State let $z_j = \frac {\partial f(A \alphav^{(t)})}{\partial \alpha_j}$
\State  $\alpha_j^{(t+1)} = s_{\lambda}(\alpha_j^{(t)} - z_j)$
\State $\wv^{(t+1)} = \nabla f (A \alphav^{(t+1)})$
\EndFor
\end{algorithmic}
\end{algorithm}

\begin{algorithm}[H] 
    \caption{Stochastic Dual Coordinate Descent (supportSet-uniform)}
    \label{svm_ssuniform}
    \begin{algorithmic}[1]
       \State let $\alphav^{(0)}=0,$ $\wv^{(0)} = 0$
		\For{ t = 0,1,...} 
		\State calculate absolute values of dual residuals $|\kappa_j^{(t)}|$ for all $j \in [n]$ 
    	\State find $t$-support set $I_t = \{ i \in [n] : \kappa_i^{(t)} \neq 0 \}  \subseteq [n]$
\State generate adaptive probabilities distribution $\pv^{(t)}$, for each $i\in[n]$:
 $$p_i^{(t)} := \begin{cases}
    \frac{1}{|I_t|},& \text{if } \kappa_i^{(t)} \neq 0 \\
    0,& \text{otherwise}
\end{cases}$$
		\State sample $j$ from $[n]$ according to distribution $\pv^{(t)}$
		\State $\Delta \alpha_j = y_j \max\bigg(0, \min \Big(1, \frac{1 - y_j \av_j^\top \wv^{(t)} }{\|\av_j\|^2 / (\lambda n) } + y_j \alpha_j^{(t)} \Big) \bigg) - \alpha_j^{(t)}$
		\State $\alphav^{(t+1)} = \alphav^{(t)} + \Delta \alpha_j \ev_j$    
		\State $\wv^{(t+1)} =\wv^{(t)} + (\lambda n)^{-1} \Delta \alpha_j \av_j$
		\EndFor
    \end{algorithmic}
\end{algorithm}

\begin{algorithm}[H] 
    \caption{Stochastic Dual Coordinate Descent (adaptive)}
    \label{svm_adaptive}
    \begin{algorithmic}[1]
       \State let $\alphav^{(0)}=0,$ $\wv^{(0)} = 0$
		\For{ t = 0,1,...} 
		\State calculate absolute values of dual residuals $|\kappa_j^{(t)}|$ for all $j \in [n]$ 
    	\State generate adaptive probabilities distribution $\pv^{(t)}$:
    	 $$p_i^{(t)} = \frac{|\kappa_i^{(t)}| \|\av_i\|}{\sum_j |\kappa_j^{(t)}| \|\av_j\|}$$
		\State sample $j$ from $[n]$ according to distribution $\pv^{(t)}$
		\State $\Delta \alpha_j = y_j \max\bigg(0, \min \Big(1, \frac{1 - y_j \av_j^\top \wv^{(t)} }{\|\av_j\|^2 / (\lambda n) } + y_j \alpha_j^{(t)} \Big) \bigg) - \alpha_j^{(t)}$
		\State $\alphav^{(t+1)} = \alphav^{(t)} + \Delta \alpha_j \ev_j$    
		\State $\wv^{(t+1)} =\wv^{(t)} + (\lambda n)^{-1} \Delta \alpha_j \av_j$
		\EndFor
    \end{algorithmic}
\end{algorithm}

\begin{algorithm}[H] 
    \caption{Stochastic Dual Coordinate Descent (ada-uniform)}
    \label{svm_ada-unif}
    \begin{algorithmic}[1]
       \State let $\alphav^{(0)}=0,$ $\wv^{(0)} = 0$
		\For{ t = 0,1,...} 
		\State calculate absolute values of dual residuals $|\kappa_j^{(t)}|$ for all $j \in [n]$ 
    	\State find $t$-support set $I_t = \{ i \in [n] : \kappa_i^{(t)} \neq 0 \}  \subseteq [n]$
\State generate adaptive probabilities distribution $\pv^{(t)}$:
 $$\begin{cases}
    p_i^{(t)} =  \frac{1}{2|I_t|} + \frac{|\kappa_i^{(t)}| \|\av_i\|}{2\sum_j |\kappa_j^{(t)}| \|\av_j\|},& \text{if } \kappa_i^{(t)} \neq 0 \\
    p_i^{(t)} = 0,& \text{otherwise}
\end{cases}$$
		\State sample $j$ from $[n]$ according to distribution $\pv^{(t)}$
		\State $\Delta \alpha_j = y_j \max\bigg(0, \min \Big(1, \frac{1 - y_j \av_j^\top \wv^{(t)} }{\|\av_j\|^2 / (\lambda n) } + y_j \alpha_j^{(t)} \Big) \bigg) - \alpha_j^{(t)}$
		\State $\alphav^{(t+1)} = \alphav^{(t)} + \Delta \alpha_j \ev_j$    
		\State $\wv^{(t+1)} =\wv^{(t)} + (\lambda n)^{-1} \Delta \alpha_j \av_j$
		\EndFor
    \end{algorithmic}
\end{algorithm}

\begin{algorithm}[H] 
    \caption{Stochastic Dual Coordinate Descent (ada-gap)}
    \label{svm_gapwise}
    \begin{algorithmic}[1]
       \State let $\alphav^{(0)}=0,$ $\wv^{(0)} = 0$
		\For{ t = 0,1,...} 
    	\State generate adaptive probabilities distribution $\pv^{(t)}$:
    	$$p_i = \frac{\phi_i(\av_i^\top\wv) -  \alpha_iy_i + \alpha_i\av_i^\top \wv }{\sum_{j=1}^n \Big ( \phi_j(\av_j^\top\wv) -  \alpha_jy_j + \alpha_j\av_j^\top \wv \Big )}$$ 
		\State sample $j$ from $[n]$ according to distribution $\pv^{(t)}$
		\State $\Delta \alpha_j = y_j \max\bigg(0, \min \Big(1, \frac{1 - y_j \av_j^\top \wv^{(t)} }{\|\av_j\|^2 / (\lambda n) } + y_j \alpha_j^{(t)} \Big) \bigg) - \alpha_j^{(t)}$
		\State $\alphav^{(t+1)} = \alphav^{(t)} + \Delta \alpha_j \ev_j$    
		\State $\wv^{(t+1)} =\wv^{(t)} + (\lambda n)^{-1} \Delta \alpha_j \av_j$
		\EndFor
    \end{algorithmic}
\end{algorithm}

\newpage
\section{Experimental results on large dataset}
\label{app:expBIG}
A comparison of the algorithms with fixed per-epoch sampling on large dataset \textit{rcv1} (see Table \ref{dataBIG_lasso}) is given in Figures \ref{fig:results_rcv1BIG} and \ref{fig:results_rcv1BIG_SVM}. We use $\lambda = 7\cdot 10^{-4}$ for Lasso and $\lambda = 0.1$ for SVM.
\begin{table}[H]
\caption{Datasets}\vspace{-2mm}
\label{dataBIG_lasso}
{\small
\begin{tabular}{|l| l| l| l| l|}%
\hline
Dataset & $d$ & $n$ & nnz$/(nd)$ & $c_v = \frac{\mu(\|\av_i\|)}{\sigma(\|\av_i\|)}$ \tabularnewline %
\hline
rcv1 & 47236 & 20242 & 0.16\% & 0.57 \tabularnewline 
\hline 
\end{tabular}
}%
\end{table}
\begin{figure}[h]
\centering
\begin{subfigure}{.25\textwidth}
  \centering
\includegraphics[width=0.9\linewidth]{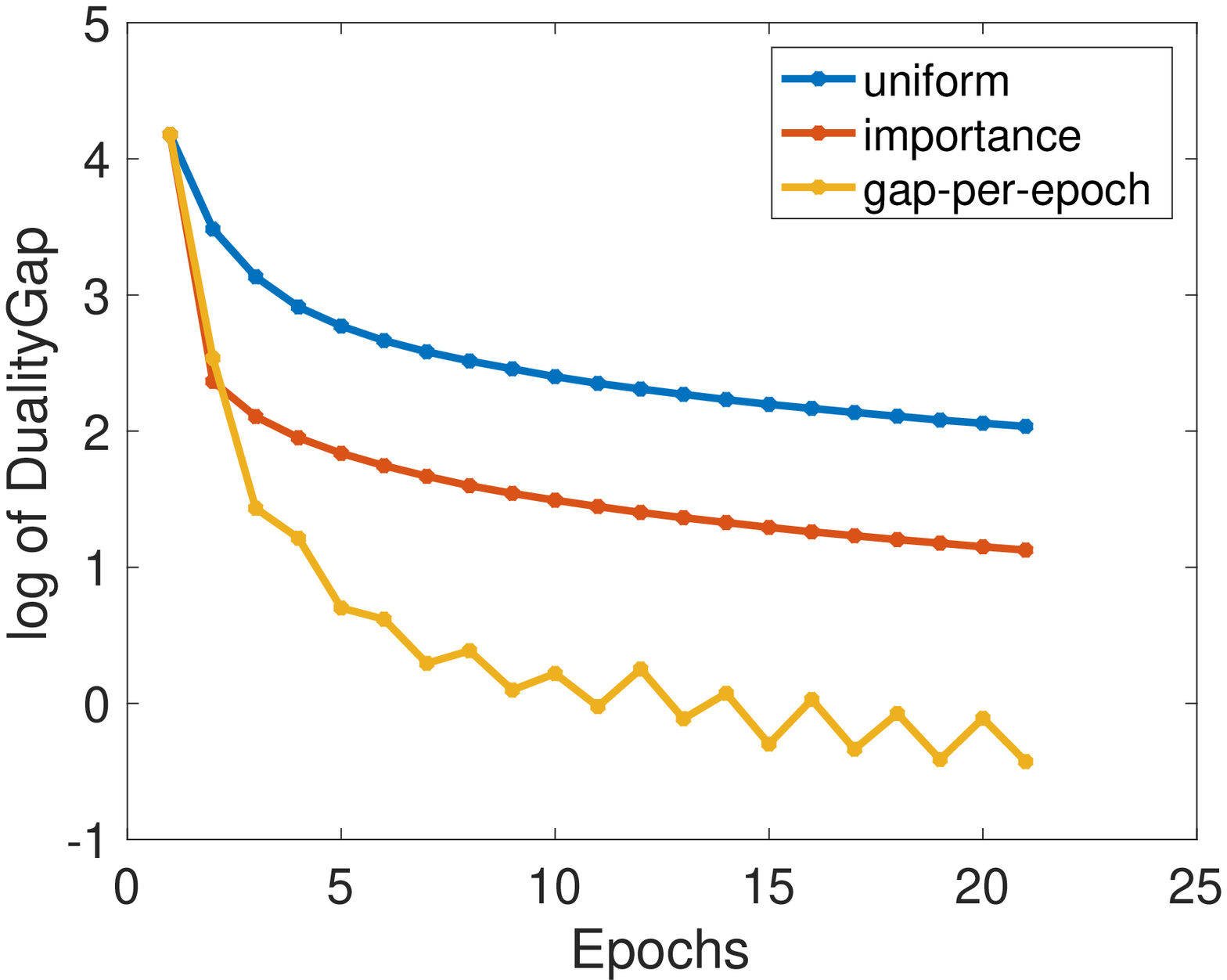}
  \caption{Gap}
\end{subfigure}%
\begin{subfigure}{.25\textwidth}
  \centering
\includegraphics[width=0.9\linewidth]{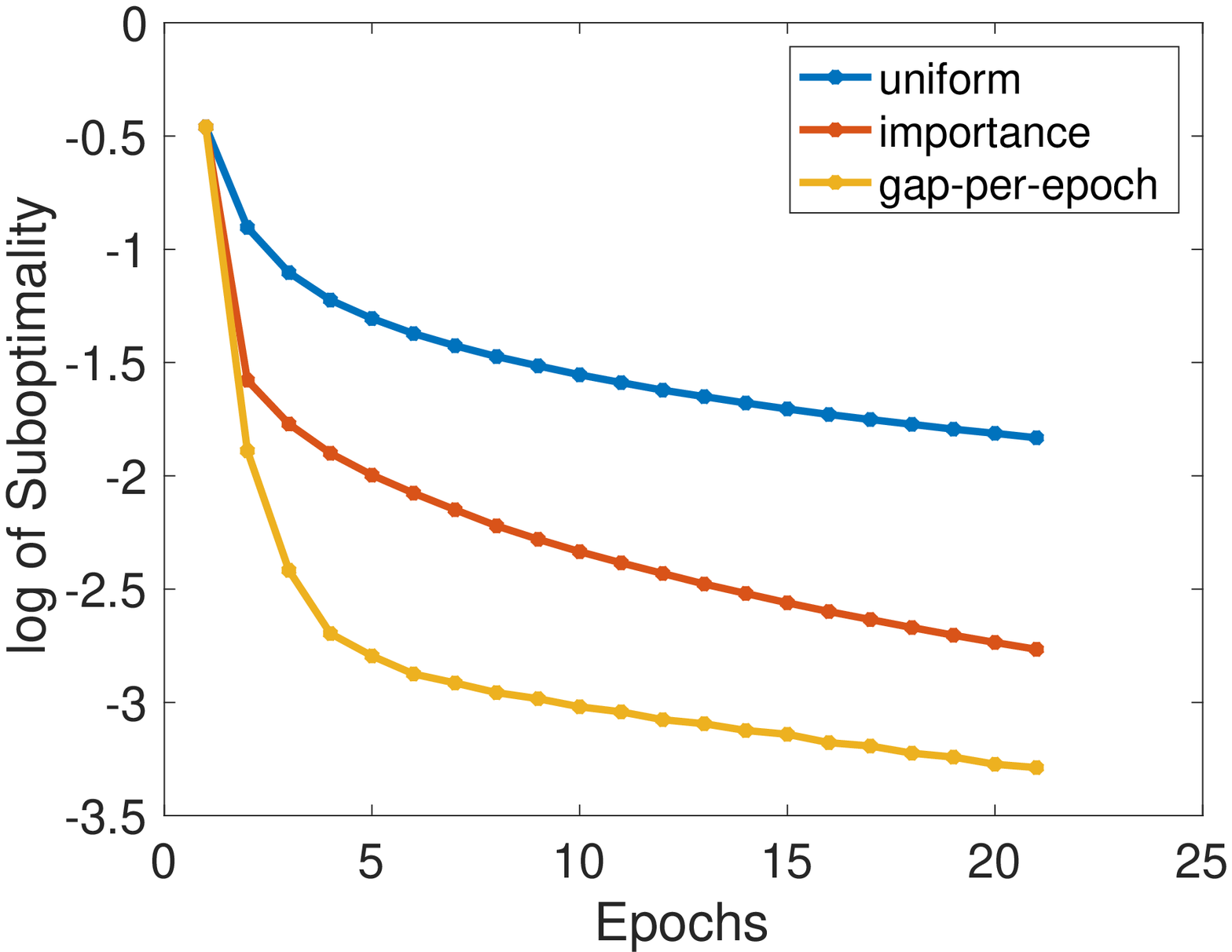}
  \caption{Suboptimality}
\end{subfigure}%
\caption{Lasso on the full \textit{rcv1} dataset. Performance in terms of duality gap and suboptimality}
\label{fig:results_rcv1BIG}
\end{figure}

\begin{figure}[h]
\centering
\begin{subfigure}{.25\textwidth}
  \centering
\includegraphics[width=0.9\linewidth]{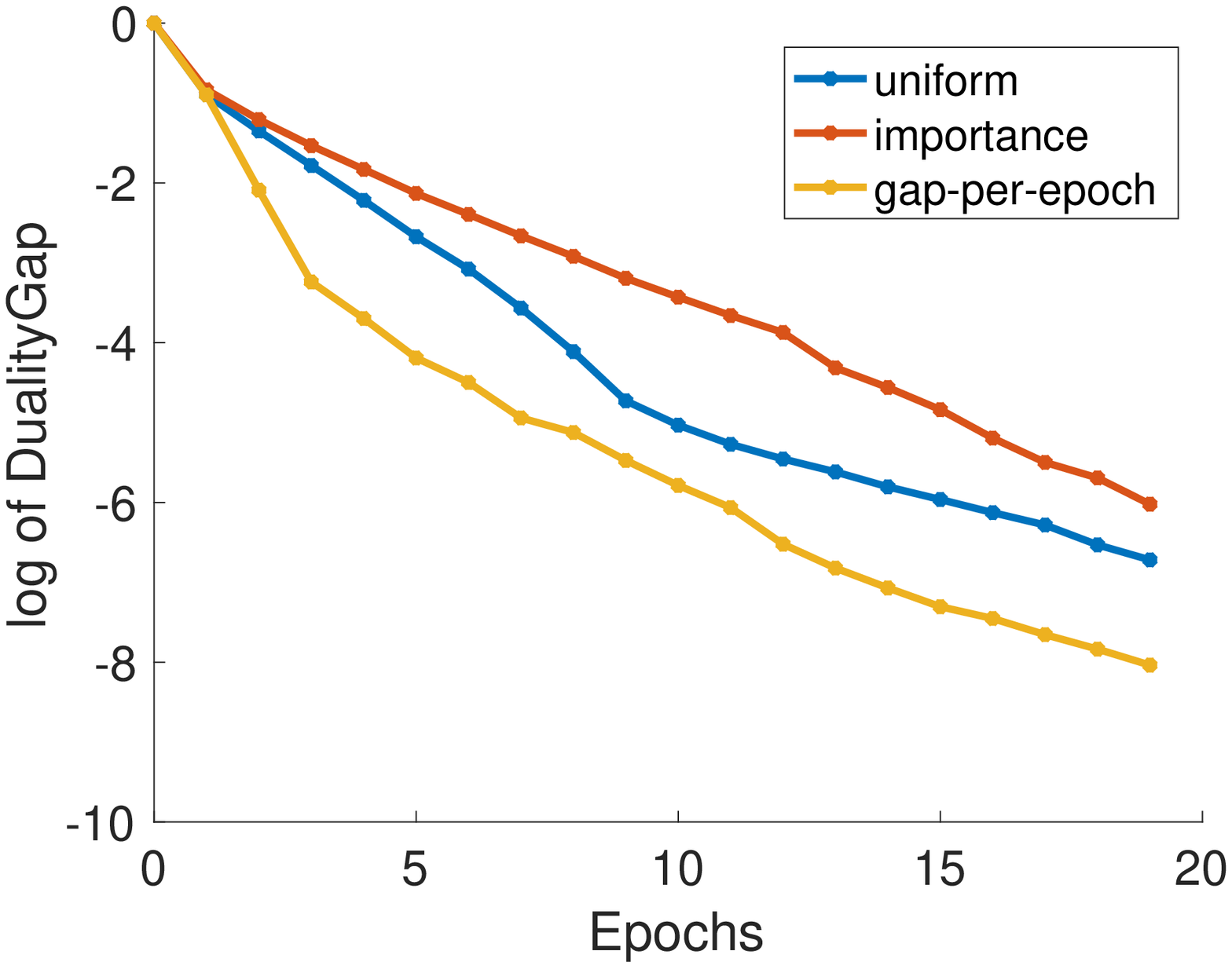}
  \caption{Gap}
\end{subfigure}%
\begin{subfigure}{.25\textwidth}
  \centering
\includegraphics[width=0.9\linewidth]{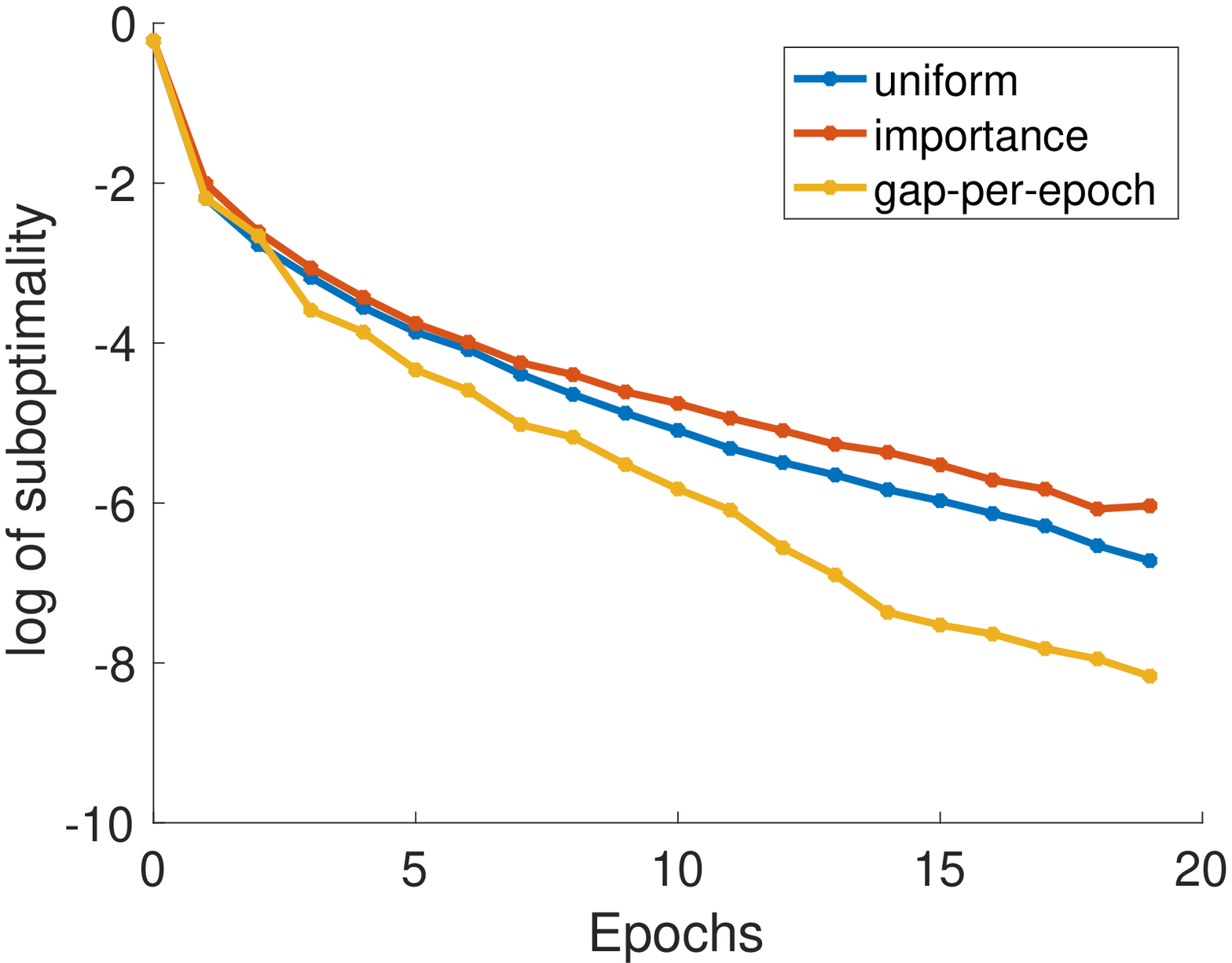}
  \caption{Suboptimality}
\end{subfigure}%
\caption{SVM on the full \textit{rcv1} dataset. Performance in terms of duality gap and suboptimality}
\label{fig:results_rcv1BIG_SVM}
\end{figure}
\end{document}